\documentclass[11pt]{article}

\usepackage[preprint]{acl}

\usepackage{times}
\usepackage{latexsym}
\usepackage[T1]{fontenc}
\usepackage[utf8]{inputenc}
\usepackage{microtype}
\usepackage{inconsolata}
\usepackage{graphicx}

\usepackage{amsmath}
\usepackage{amsfonts}
\usepackage{amssymb}
\usepackage{algorithm}
\usepackage{algpseudocode}
\usepackage{calc}

\usepackage{caption}
\usepackage[normalem]{ulem}
\usepackage{booktabs}
\usepackage{multirow}
\usepackage{makecell}
\usepackage{enumitem}
\usepackage{amsthm}
\usepackage{multirow}

\theoremstyle{definition}
\newtheorem{proposition}{Proposition}

\newcommand{\attn}[1]{\texttt{attn.#1}}
\newcommand{\mlp}[1]{\texttt{mlp.#1}}

\title{MoARa: Module-Aware Rank Allocation and Structure-Preserving Decomposition for Low-Rank LLM Pre-training}

\author{Keunyoung Kim \\
  Seoul National University \\
  \texttt{keunyoung.kim@snu.ac.kr} \\\And
  Nojun Kwak \\
  Seoul National University \\
  \texttt{nojunk@snu.ac.kr} \\}

\begin{document}
\maketitle

% ============================================================================
% ============================================================================
% ⬇️⬇️⬇️⬇️ ABSTRACT ⬇️⬇️⬇️⬇️
% ============================================================================
% ============================================================================

\begin{abstract}
Low-rank gradient projection reduces the optimizer-state memory cost of large language model (LLM) pretraining, but the steps and wall-clock time needed to reach a target quality remain a meaningful axis for improvement. 
We attribute this to two design choices in existing methods: the projection-rank budget is allocated uniformly across Transformer modules with heterogeneous projection sensitivity, and projecting a raw gradient attenuates its magnitude and direction jointly. 
We propose \textbf{MoARa}, which combines a static profiling-based \emph{module-aware projection-rank allocation} with a \emph{block-wise magnitude--direction decomposition}; the default block size is set in the neighborhood of the attention head dimension. 
Across five Transformer architectures spanning Llama, Qwen, and DeepSeek at 300M to 7B scales, GaLore with MoARa reaches standard GaLore's final perplexity in 37\% fewer steps and 34\% less wall-clock time on Llama 2 7B, with only 0.2\% peak reserved memory overhead under standard graph compilation. 
Across the six low-rank pretraining methods we evaluate, module-aware rank allocation alone delivers directionally consistent step reductions on all six.
On compatible hosts, the two-component design reaches up to 41.7\% step reduction and 37.1\% wall-clock reduction.
\end{abstract}

% ============================================================================
% ============================================================================
% ⬆️⬆️⬆️⬆️ ABSTRACT ⬆️⬆️⬆️⬆️
% ============================================================================
% ============================================================================

% ============================================================================
% ============================================================================
% ⬇️⬇️⬇️⬇️ FIGURE: 7B_MAIN ⬇️⬇️⬇️⬇️
% ============================================================================
% ============================================================================

\begin{figure}
    \centering
    \includegraphics[width=1\linewidth]{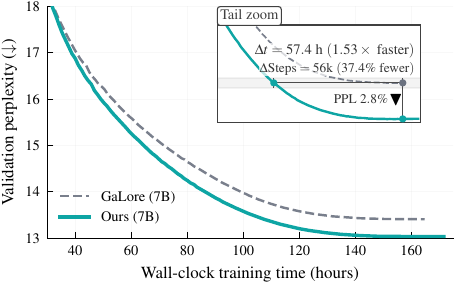}
    \caption{\textbf{Accelerated pre-training convergence at the 7B scale.} Validation PPL versus wall-clock time for GaLore with MoARa and standard GaLore on Llama~2 7B; the inset (\textit{Tail zoom}) marks the convergence-time and final-perplexity gains highlighted in the figure.} 
    \label{fig:7b_main}
\end{figure}

% ============================================================================
% ============================================================================
% ⬆️⬆️⬆️⬆️ FIGURE: 7B_MAIN ⬆️⬆️⬆️⬆️
% ============================================================================
% ============================================================================

% ============================================================================
% ============================================================================
% ⬇️⬇️⬇️⬇️ Sec 1. INTRODUCTION ⬇️⬇️⬇️⬇️
% ============================================================================
% ============================================================================

\section{Introduction}
\label{sec:introduction}

Recent advances in large language models (LLMs) have been driven by scaling model size, data, and training compute, with memory overhead emerging as a major systems bottleneck. Under Adam~\citep{kingma2015adam}-style full-parameter training, optimizer states alone can consume memory comparable to or exceeding model weights, since maintaining 1st- and 2nd-moment statistics requires two additional tensors of the same size as each parameter. At the billion-parameter scale, this overhead substantially raises hardware requirements and limits practical scalability.

A growing line of work addresses this bottleneck by projecting Adam's optimizer states into a low-dimensional subspace, retaining full-parameter learning while substantially reducing optimizer-state memory. Representative methods include GaLore~\citep{zhao2024galore}, which periodically recomputes the projection via SVD, and several subsequent extensions that refine how this subspace is maintained~\citep{liang2024osd, robert2025ldadam, rajabi2025subtrackpp, zhang2025qgalore, chen2025fira}. These methods now approach the validation perplexity of full-rank training at substantially lower memory cost, but the number of optimization steps and the wall-clock time required to reach a given quality target remains a meaningful axis along which low-rank pretraining can be made more practical.

We identify two design limitations that constrain the convergence speed of existing methods. \emph{First}, the projection-rank budget is allocated uniformly across Transformer modules, implicitly treating the seven attention and MLP projections ($\texttt{attn.q}$/$\texttt{k}$/$\texttt{v}$/$\texttt{o}$ and $\texttt{mlp.up}$/$\texttt{gate}$/$\texttt{down}$) as equally sensitive to projection. As we show, the alignment between each module's full gradient and its low-rank reconstruction is sharply heterogeneous, so a uniform allocation over-provisions some modules while leaving others as bottlenecks. \emph{Second}, projecting a gradient into a low-rank subspace attenuates both its local magnitude and direction. As a result, the per-block scale information that helps the optimizer adapt is lost together with the directional information.

To address these two limitations, we propose \textbf{Module-Aware Rank Allocation and Structure-Preserving Decomposition for Low-Rank LLM Pretraining (MoARa)}, a memory-efficient pretraining framework with two complementary components. 
The first, \emph{module-aware projection-rank allocation}, redistributes a fixed total projection-rank budget across Transformer module types. 
Based on a profiling alignment diagnostic, we consolidate this redistribution into a static allocation rule (\S\ref{sec:rank_allocation}). 
The second, \emph{block-wise magnitude--direction decomposition}, partitions each gradient row into contiguous blocks and separates per-block scale from direction before projection. 
The block-local scale is retained as a separate optimizer state outside the low-rank projection, which is the specific structure preserved by our method (\S\ref{sec:block_decomposition}). 
We set the default block size in the neighborhood of the attention head dimension, which our analysis identifies as a balanced regime and our empirical sweep supports as a robust plateau (\S\ref{sec:theoretical_foundations}).

We evaluate MoARa across five Transformer architectures (Llama 2~\citep{touvron2023llama2openfoundation}, Llama 3.2~\citep{grattafiori2024llama3herdmodels}, Qwen2.5~\citep{qwen2025qwen25technicalreport}, Qwen3~\citep{yang2025qwen3technicalreport}, DeepSeek-V2~\citep{deepseekai2024deepseekv2strongeconomicalefficient}; 300M to 7B), at sequence lengths 256 and 2048, and across six low-rank pretraining methods. On Llama 2 7B (Figure~\ref{fig:7b_main}), MoARa reaches standard GaLore's final perplexity 57.4 hours earlier --- 37\% fewer steps and 34\% less wall-clock time --- with only 0.2\% additional peak reserved memory under standard graph compilation. Our main contributions are:

% ============================================================================
% ============================================================================
% ⬇️⬇️⬇️⬇️ Sec 1. INTRODUCTION - BULLET POINT ⬇️⬇️⬇️⬇️
% ============================================================================
% ============================================================================

\begin{itemize}[itemsep=2pt, topsep=2pt, leftmargin=*]

\item \textbf{A module-aware low-rank pretraining framework.} 
We introduce MoARa, which couples module-aware projection-rank allocation with structure-preserving block-wise magnitude--direction decomposition. 
Both components are motivated by testable claims about the information geometry of low-rank gradient projection.
The attention-head dimension provides an architecture-informed reference for the default block size.

\item \textbf{Accelerated convergence across scale, architectures, and sequence length.} On Llama 2, MoARa applied to GaLore yields up to $1.64\times$ step speedup from 350M to 7B; at 7B this corresponds to 37\% fewer steps and approximately 57.4 hours saved. Comparable reductions extend to sequence length 2048 and to four other recent Transformer architectures, with $+0.2\%$ peak reserved memory at 7B under standard graph compilation (Table~\ref{tab:memory_breakdown}).

\item \textbf{Component-specific cross-method transferability.} 
Module-aware rank allocation transfers as a directionally consistent improvement across all six low-rank pretraining methods we evaluate (GaLore~\citep{zhao2024galore}, Fira~\citep{chen2025fira}, Q-GaLore~\citep{zhang2025qgalore}, SubTrack++~\citep{rajabi2025subtrackpp}, LDAdam~\citep{robert2025ldadam}, OSD~\citep{liang2024osd}), with 8.3\%--28.3\% step reductions on the five non-GaLore methods.
In contrast, structure-preserving decomposition is host-dependent and shows limited benefit under Q-GaLore.
When combined with projection rank allocation on compatible hosts, the framework reaches up to 41.7\% step and 37.1\% wall-clock reduction.
\end{itemize}

% ============================================================================
% ============================================================================
% ⬆️⬆️⬆️⬆️ Sec 1. INTRODUCTION ⬆️⬆️⬆️⬆️
% ============================================================================
% ============================================================================

% ============================================================================
% ============================================================================
% ⬇️⬇️⬇️⬇️ Sec 2. RELATED WORK ⬇️⬇️⬇️⬇️
% ============================================================================
% ============================================================================

\section{Related Work}
\label{sec:related_work}

\paragraph{Memory-efficient pretraining via low-rank gradient projection.}
Reducing optimizer-state memory under full-parameter updates is a central challenge in LLM pretraining. GaLore~\citep{zhao2024galore} introduces gradient low-rank projection, periodically recomputing the projection via SVD. Subsequent methods primarily refine how the subspace is maintained: OSD~\citep{liang2024osd} replaces periodic SVD with online PCA and provides the first convergence guarantee for arbitrary projection update rules; LDAdam~\citep{robert2025ldadam} introduces a projection-aware update rule for optimizer states across changing subspaces with a generalized error-feedback mechanism; SubTrack++~\citep{rajabi2025subtrackpp} tracks the gradient subspace on the Grassmannian manifold with projection-aware moments; Q-GaLore~\citep{zhang2025qgalore} combines INT4-quantized projection matrices with layer-wise adaptive SVD frequency; and Fira~\citep{chen2025fira} restores full-rank gradient updates while keeping low-rank optimizer state through norm-based scaling. Recent work further refines subspace selection through importance sampling and moment orthogonalization~\citep{zhang2025breakingfrozensubspaceimportance,refael2025sumo}. Across this family, however, the projection-rank budget is distributed \emph{uniformly} across Transformer modules, implicitly assuming that the seven attention and MLP projections share equal optimization sensitivity. Our work explicitly challenges this assumption and reallocates the budget module-wise via a baseline profiling diagnostic.

\paragraph{Memory efficiency along orthogonal axes.}
Memory-efficient pretraining has also been pursued along axes orthogonal to optimizer-side SVD projection. CoLA \citep{liu2025cola} restructures attention and MLP projections as low-rank autoencoders, which alters the model architecture itself and places it outside our fixed-architecture, fixed-budget comparison frame. APOLLO \citep{zhu2025apollo} uses random-projection-based structured AdamW~\citep{loshchilov2019adamw} scaling whose auxiliary state carries no per-module $r$-dimensional subspace, so it lies in a different design space from the methods compared in \S\ref{sec:experiments}.

\paragraph{Gradient and weight decomposition.}
Decoupling magnitude from direction has a long history of stabilizing high-dimensional optimization. Weight Normalization~\citep{salimans2016weightnorm} reparameterizes weights as $w = g \cdot v/\|v\|$ for improved gradient conditioning, and DoRA~\citep{liu2024dora} extends this decoupling to parameter-efficient fine-tuning; both operate on weights rather than gradients. On the optimizer side, Adafactor~\citep{shazeer2018adafactor} factorizes second-moment statistics row/column-wise, and VLoRP~\citep{wang2025vlorp} varies the granularity of low-rank gradient projection. MoARa is, to our knowledge, the first method to apply magnitude--direction decomposition to gradients in low-rank pretraining, where the goal is preserving directional information through the projection bottleneck.

\paragraph{Module-aware rank allocation in Transformers.}
A growing literature establishes that Transformer modules are structurally heterogeneous and benefit from differentiated low-rank treatment. In post-hoc compression, LoRAP~\citep{li2024lorap} and $\mathrm{A}^3$~\citep{wong2025a3} report distinct low-rank characteristics between attention sub-layers and feed-forward components; in parameter-efficient fine-tuning, AdaLoRA~\citep{zhang2023adalora}, ALoRA~\citep{liu2024alora}, and ARA~\citep{xv2025ara} adaptively allocate the rank budget across weight matrices based on importance scores. 
All of these methods target compression or fine-tuning of already-trained models, where importance can be measured on stable representations.
From-scratch pretraining is fundamentally different, as rank allocation must account for Adam's evolving moment states.

To our knowledge, MoARa is the first to bring budget-preserving, module-aware rank allocation to from-scratch LLM pretraining. This is done via a static profiling phase to maintain optimizer history without destabilizing training. The same module-wise sensitivity pattern persists across the five Transformer architectures we evaluate, suggesting the asymmetry is structural rather than incidental. A summary of how MoARa positions relative to these methods is provided in Appendix~\ref{appx:positioning} (Table~\ref{tab:positioning}).

% ============================================================================
% ============================================================================
% ⬆️⬆️⬆️⬆️ Sec 2. RELATED WORK ⬆️⬆️⬆️⬆️
% ============================================================================
% ============================================================================

% ============================================================================
% ============================================================================
% ⬇️⬇️⬇️⬇️ Sec 3. METHODOLOGY ⬇️⬇️⬇️⬇️
% ============================================================================
% ============================================================================

\section{Methodology}
\label{sec:methodology}
 
We present \textbf{MoARa}, a memory-efficient pretraining framework with two components: Block-wise Magnitude--Direction Decomposition and Projection Rank Reallocation via Subspace Alignment. Algorithm~\ref{alg:moara_full} summarizes the overall procedure.

% ============================================================================
% ============================================================================
% ⬇️⬇️⬇️⬇️ ALGORITHM 1 ⬇️⬇️⬇️⬇️
% ============================================================================
% ============================================================================

\begin{algorithm}[t]
    \caption{Overall procedure of MoARa}
    \label{alg:moara_full}
    \small
    \newcommand{\StatexIndent}[1]{%
      \Statex \hspace{\algorithmicindent}%
      \parbox[t]{\dimexpr\linewidth-\algorithmicindent\relax}{#1}%
    }
    {\fontsize{9.5pt}{10.8pt}\selectfont
    \begin{algorithmic}[1]
    \Require Model $\mathcal{M}$ with module types $\mathcal{T}$; base rank $r_{\mathrm{base}}$; 
    total budget $R_{\mathrm{total}}$; profiling step set $\mathcal{S}_p = \{x, 2x, \ldots, N_p\}$; 
    donor set $\mathcal{D}$, receiver set $\mathcal{R}$, transfer amount $\Delta r$; 
    SVD interval $T_{\mathrm{SVD}}$; block size $B$; 
    learning rate $\eta$.
    \Ensure Trained weights $W$.
    \Statex \textbf{Setup phase (executed once before training).}
    \StatexIndent{\textbf{Phase 1. Profiling.}}
    \StatexIndent{Run uniform-rank GaLore for $N_p$ steps; at each $t \in \mathcal{S}_p$ and each weight $W$ of type $\tau$, log $G_t(W)$ and $G_{\mathrm{recon},t}(W)$.}
    \Statex
    \StatexIndent{\textbf{Phase 2. Diagnostic verification.}}
    \StatexIndent{Compute $S_\tau$ for all $\tau\in\mathcal{T}$; validate that $\mathcal{R}$ contains
    low-$S_\tau$ modules and $\mathcal{D}$ matches donor ablations.}
    \Statex
    \StatexIndent{\textbf{Phase 3. Budget-preserving reallocation.}}
    \StatexIndent{Set $r_\tau \gets r_{\mathrm{base}} - \Delta r$ for $\tau \in \mathcal{D}$; 
    $r_\tau \gets r_{\mathrm{base}} + \Delta r \cdot |\mathcal{D}|/|\mathcal{R}|$ for $\tau \in \mathcal{R}$; 
    $r_\tau \gets r_{\mathrm{base}}$ otherwise.}
    \Statex
    \Statex \textbf{Training phase (with fixed ranks from Phase 3).}
    \For{each training step $s$}
        \For{each target weight $W$ of type $\tau$}
            \State $G \gets \nabla_W \mathcal{L}$
            \State $M, V \gets \mathrm{Decompose}(G, B)$
            \If{$s \mod T_{\mathrm{SVD}} = 0$}
                \State $P_\tau \gets \mathrm{TruncatedSVD}(V, r_\tau)$
            \EndIf
            \State $\tilde V \gets P_\tau^\top V$
            \State $\Delta \tilde V \gets \mathrm{Optimizer}_V(\tilde V)$
            \State $\Delta V \gets P_\tau \, \Delta \tilde V$
            \State $\Delta M \gets \mathrm{Optimizer}_M(M)$
            \State $W \gets W - \eta \bigl(\mathrm{Rep}_B(\Delta M) \odot \Delta V\bigr)$
        \EndFor
    \EndFor
    \State \Return $W$
    \end{algorithmic}
    }
\end{algorithm}

% ============================================================================
% ============================================================================
% ⬆️⬆️⬆️⬆️ ALGORITHM 1 ⬆️⬆️⬆️⬆️
% ============================================================================
% ============================================================================

\subsection{Block-wise Magnitude and Direction Decomposition}
\label{sec:block_decomposition}
 
To mitigate the memory cost of optimizer states, GaLore~\citep{zhao2024galore} projects each gradient $G \in \mathbb{R}^{m \times n}$ into a low-rank subspace as $G_{\mathrm{proj}} = P^\top G$ via a projection matrix $P \in \mathbb{R}^{m \times r}$ ($r \ll m, n$), and maintains optimizer states only for the projected gradient. Projecting raw gradients into this fixed subspace, however, forces magnitude and direction to be compressed jointly, attenuating both factors together and potentially suppressing low-energy but still informative updates. Decomposing the gradient before projection reduces this coupled attenuation, so that the projected branch focuses on directional preservation while a separate state tracks magnitude. A naive row-wise decomposition is overly coarse for Transformer weight matrices: applying a single scalar magnitude to an entire row imposes an artificial dependency among heterogeneous features and can distort the projected subspace. Element-wise normalization, at the other extreme, is unnecessarily expensive. Thus block size is introduced to control the granularity of the decomposition. We adopt an intermediate granularity and decompose the gradient into block-wise magnitude and direction.
 
Let $G \in \mathbb{R}^{m \times n}$.
When $B \nmid n$, we right-zero-pad each row to the smallest column dimension
$n_{\mathrm{pad}} \ge n$ such that $B \mid n_{\mathrm{pad}}$.
Let $k=n_{\mathrm{pad}}/B$ and denote the padded gradient by
$G_{\mathrm{pad}} \in \mathbb{R}^{m \times n_{\mathrm{pad}}}$.
We partition each padded row into $k$ contiguous $1\times B$ blocks:
\begin{equation}
\begin{aligned}
g_{i,j}
&=
(G_{\mathrm{pad}})_{i,\,jB:(j+1)B}
\in \mathbb{R}^{B},\\
&\quad \text{where} \quad j=0,\ldots,k-1.
\end{aligned}
\end{equation}
For each block, we compute a scalar magnitude
$m_{i,j}=\lVert g_{i,j}\rVert_2$
and a normalized direction block
\[
(V_{\mathrm{pad}})_{i,\,jB:(j+1)B}
=
\frac{(G_{\mathrm{pad}})_{i,\,jB:(j+1)B}}
{m_{i,j}+\epsilon}
\]
for a small $\epsilon>0$.
Collecting all magnitudes yields
$M\in\mathbb{R}^{m\times k}$, and
$V_{\mathrm{pad}}\in\mathbb{R}^{m\times n_{\mathrm{pad}}}$.
With $\operatorname{Rep}_B(M)$ denoting the matrix obtained by repeating
each $m_{i,j}$ over its $B$ columns, the padded representation satisfies
\[
G_{\mathrm{pad}}
\approx
\operatorname{Rep}_B(M)\odot V_{\mathrm{pad}},
\]
with exact equality recovered as $\epsilon\rightarrow0$.
The padded columns are discarded after decomposition and recomposition,
so the resulting direction and update tensors retain the original shape.
 
\paragraph{Choice of $B$: head-dimension-informed default.}
We adopt $B$ in the neighborhood of the attention head dimension $d_{\mathrm{head}}$ as an architecture-informed scale reference, rather than requiring exact head alignment. We provide a theoretical analysis of the underlying trade-off in \S\ref{sec:theoretical_foundations}, empirical validation in Appendix~\ref{appx:block_size_sensitivity}, and specific values per model scale in Appendix~\ref{appx:block_size_config}.
 
\subsection{Projection Rank Reallocation via Subspace Alignment}
\label{sec:rank_allocation}
 
\paragraph{Architectural intuition and alignment diagnostic.}
Our second design is motivated by a mismatch between uniform projection-rank allocation and the heterogeneous sensitivity of Transformer modules. If projection distortion is structurally non-uniform, a uniform rank allocation over-provisions some modules while leaving others as bottlenecks. We form an a priori expectation about which modules tolerate projection-rank reduction by reading the decoder block: $\attn{q}$ and $\attn{k}$ parameterize token-to-token compatibility scores (routing), $\attn{v}$ carries representational content and $\attn{o}$ integrates the multi-head outputs, and the MLP projections transform and propagate representational content, with $\mlp{down}$ acting as the bottleneck that compresses the expanded MLP representation back into the model dimension. We therefore expected the routing pair $\attn{q}$/$\attn{k}$ to be more tolerant to rank reduction than the content-carrying modules, and $\mlp{down}$ to be most sensitive. To test this, we measure how well the projected subspace preserves each module's gradient direction via cosine similarity. With $G_{\mathrm{recon}} = \mathrm{Rep}_B(M) \odot (P P^\top V)$, the alignment at step $t$ is
\begin{equation}
S(W, t) = \cos\bigl(G_t(W),\, G_{\mathrm{recon},t}(W)\bigr),
\label{eq:module_alignment}
\end{equation}
and the module-wise alignment score $S_\tau = \mathbb{E}_{t,\, W \in \tau}[S(W, t)]$ averages over profiling steps and matrix instances of module type $\tau$.
 
\paragraph{Donor and receiver selection from joint evidence.}
The observed $S_\tau$ ordering places $\mlp{down}$ lowest by a wide margin, the routing pair $\attn{q}$/$\attn{k}$ at moderate values, and the content/integration modules $\attn{v}$/$\attn{o}$ highest, with $\mlp{up}$ and $\mlp{gate}$ in the middle (Table~\ref{tab:cosine_stride_ranking}).  The budget-receiver role of $\mlp{down}$ is strongly supported, but the cosine ranking alone would suggest $\attn{v}$/$\attn{o}$ as budget-donors --- in conflict with our architectural intuition. A single-module rank-reduction ablation (Appendix~\ref{appx:donor_sensitivity}) resolves the conflict: reducing the rank of $\attn{q}$ or $\attn{k}$ causes only a small increase in final perplexity, while reducing $\attn{v}$, $\attn{o}$, $\mlp{up}$, or $\mlp{gate}$ causes a substantially larger one. The high cosine alignment of $\attn{v}$/$\attn{o}$ therefore reflects a structural redundancy the projection preserves well, rather than actual robustness to capacity loss. We accordingly select the donor set $\mathcal{D} = \{\attn{q}, \attn{k}\}$ and the receiver set $\mathcal{R} = \{\mlp{down}\}$.
 
\paragraph{Static budget-preserving reallocation.}
Given a total projection-rank budget $R_{\mathrm{total}}$, we assign $r_\tau \gets r_{\mathrm{base}} - \Delta r$ for $\tau \in \mathcal{D}$, $r_\tau \gets r_{\mathrm{base}} + \Delta r \cdot |\mathcal{D}|/|\mathcal{R}|$ for $\tau \in \mathcal{R}$, and the uniform baseline rank otherwise; this preserves $\sum_\tau r_\tau = R_{\mathrm{total}}$ by construction. We adopt $\Delta r = r_{\mathrm{base}}/2$ as a balanced default that transfers a nontrivial fraction of donor capacity while preserving half of each donor's baseline rank, without claiming this value is optimal. The resulting integer ranks are fixed throughout training. This static allocation is a deliberate design choice. In Adam-style low-rank training, changing ranks online changes the dimensionality of the optimizer states --- newly added directions require fresh first- and second-moment initialization, while removed directions discard accumulated optimizer history, which can destabilize the training trajectory. Static allocation captures persistent module-wise sensitivity while maintaining optimizer-state continuity. 
Profiling runs once before target training and is excluded from the target-training wall-clock trajectories; a 1k-step run ($\sim$11~min on 350M in our settings) suffices to recover the full assignment (Appendix~\ref{appx:profiling_cost}).
 
\subsection{Theoretical Foundations}
\label{sec:theoretical_foundations}

Two information-geometric properties, both built on the cosine perspective of $S_\tau$, motivate MoARa's design choices.

\emph{(i) Carrying magnitude outside the projection isolates block-wise cosine distortion to direction.}
Let $\widetilde{V}=P_VP_V^\top V$ and let $\widetilde{v}_{i,b}$ denote the corresponding block of $\widetilde{V}$.
For every block for which both vectors are nonzero,
\[
\cos(G_{i,b},G_{\mathrm{recon},i,b})
=
\cos(v_{i,b},\widetilde{v}_{i,b}).
\]
Thus, the scalar magnitude factor cancels from the block-wise cosine alignment, and any projection-induced alignment change is expressed through the direction branch.

\emph{(ii) Block size induces a statistical--coupling trade-off.} 
Small $B$ leaves block magnitudes dominated by per-coordinate noise. 
At the extreme $B=1$, the direction signal becomes sign-valued and loses all coordinate-wise magnitude variation. 
Large $B$ aggregates heterogeneous coordinates. 
In this regime, the unit-norm direction $v_{i,b} = G_{i,b}/m_{i,b}$ attenuates low-magnitude coordinates by the dominant ones within the same block. 
Setting $B$ in the neighborhood of $d_{\mathrm{head}}$ provides an architecture-informed operating point in the balanced regime between the two extremes.

The two observations above provide an information-geometric rationale for MoARa's design, rather than a formal convergence guarantee for MoARa-augmented Adam.
Full derivations are given in Appendix~\ref{appx:theory}.

% ============================================================================
% ============================================================================
% ⬆️⬆️⬆️⬆️ Sec 3. METHODOLOGY ⬆️⬆️⬆️⬆️
% ============================================================================
% ============================================================================

% ============================================================================
% ============================================================================
% ⬇️⬇️⬇️⬇️ FIGURE: SCALE_TRAJECTORIES ⬇️⬇️⬇️⬇️
% ============================================================================
% ============================================================================

\begin{figure}[t]
    \centering
    \includegraphics[width=1\linewidth]{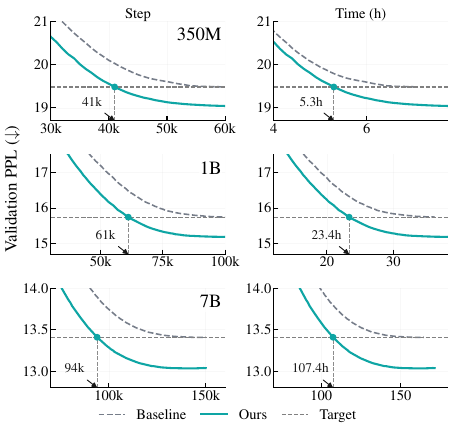}
    \caption{\textbf{Pre-training validation perplexity across model scales.} 
    The left and right columns plot perplexity versus training steps and wall-clock time, respectively. 
    Annotated points mark where our method reaches the baseline's final perplexity.}
    \label{fig:scale_trajectories}
\end{figure}

% ============================================================================
% ============================================================================
% ⬆️⬆️⬆️⬆️ FIGURE: SCALE_TRAJECTORIES ⬆️⬆️⬆️⬆️
% ============================================================================
% ============================================================================

% ============================================================================
% ============================================================================
% ⬇️⬇️⬇️⬇️ Sec 4. EXPERIMENTS ⬇️⬇️⬇️⬇️
% ============================================================================
% ============================================================================ 
 
\section{Experiments}
\label{sec:experiments}
 
\subsection{Experimental Setup}
\label{sec:experimental_setup}
 
\textbf{Models and Architectures.}
For the main scaling study, we adopt the Llama 2~\citep{touvron2023llama2openfoundation} architecture at 350M, 1B, and 7B scales. To evaluate architectural generalization, we further consider Llama 3.2 (300M)~\citep{grattafiori2024llama3herdmodels}, Qwen2.5 (350M)~\citep{qwen2025qwen25technicalreport}, Qwen3 (350M)~\citep{yang2025qwen3technicalreport}, and the dense variant of DeepSeek-V2 (350M)~\citep{deepseekai2024deepseekv2strongeconomicalefficient}, scaled to be comparable to Llama 2 350M.
 
\paragraph{Baselines.}
Our primary comparisons are within the low-rank pretraining family, where GaLore serves as our main host. To assess transferability beyond GaLore, we additionally evaluate MoARa applied to five other low-rank methods in our cross-method ablation: Fira~\citep{chen2025fira}, Q-GaLore~\citep{zhang2025qgalore}, SubTrack++~\citep{rajabi2025subtrackpp}, LDAdam~\citep{robert2025ldadam}, and OSD~\citep{liang2024osd}.

\paragraph{Projection rank allocation protocol.}
For the Llama~2 scaling study, we profile the 350M configuration and reuse the resulting donor/receiver assignment at 1B and 7B, while the numerical ranks follow each scale's $r_{base}$. 
For Llama 3.2, Qwen2.5, and Qwen3, we fix Q and K as donors and Down as the receiver. 
For our custom dense DeepSeek-V2 variant, we use Q(b) and KV(b) as donors and Down as the receiver. 
These mappings are fixed before target training without target-architecture profiling or performance-based rank tuning.
 
\paragraph{Training details.}
All models are pretrained from scratch on the English C4~\citep{raffel2023exploringlimitstransferlearning} dataset. To ensure fair comparison, we control all training hyperparameters; detailed configurations are provided in Appendix~\ref{app_sec:exp_setup_detail}. Unless otherwise stated, all methods use the same total projection rank budget.

\subsection{MoARa on GaLore across Scale, Architecture, and Sequence Length}
\label{sec:main_result}
 
We evaluate MoARa applied to GaLore along three axes: model scale, architecture family, and pretraining sequence length.
 
\paragraph{Pretraining efficiency at scale.}
Figure~\ref{fig:scale_trajectories} reports validation perplexity trajectories with respect to both training steps and wall-clock time at 350M, 1B, and 7B scales on Llama 2. 
Across all three scales, GaLore with MoARa converges faster than standard GaLore in both axes. 
On Llama 2 7B (Figure~\ref{fig:7b_main}), GaLore requires 150k steps to reach its final perplexity while GaLore with MoARa reaches the same target at 94k steps, reducing the required steps by 37\% and the wall-clock time by 34\%. 
These wall-clock values report target-training time only. 
Including the one-time 1k-step 350M profiling cost ($\sim$11 min), the non-amortized time-to-target is 5.47 h at 350M versus 7.13 h for GaLore, while the wall-clock reduction at 7B remains approximately 34.7\%.
When the same assignment is reused across $K$ target runs, the amortized time-to-target is $T_{train}$ + $T_{profile}$ / $K$ (Appendix~\ref{appx:profiling_cost}).
As a secondary fixed-budget result, when both methods are trained for 150k steps, GaLore with MoARa continues to improve after the standard GaLore curve plateaus and achieves a further 2.8\% gain in final perplexity.
This acceleration is consistent across other scales and architectures, yielding up to $1.64\times$ step speedup from 350M to 7B in Llama 2 (Figure~\ref{fig:speedup_across_scale} in Appendix~\ref{app_sec:additional_experimental_validation}). 
At 7B, the reduction in steps-to-target from 150k to 94k produces the net wall-clock saving despite a 4.19\% per-step overhead.

% ============================================================================
% ============================================================================
% ⬇️⬇️⬇️⬇️ TABLE: Llama 2 1B benchmark results ⬇️⬇️⬇️⬇️
% ============================================================================
% ============================================================================ 

\begin{table}[t]
    \centering
    \caption{\textbf{Llama 2 1B benchmark results.}
    We report means over three seeds. $\Delta$ denotes relative change. ``Win'' counts the number of seeds for which our method outperforms GaLore. Full results are in Appendix~\ref{appx:downstream}.}
    \label{tab:downstream_1b}
    \small
    \renewcommand{\arraystretch}{1.12}
    \setlength{\tabcolsep}{0pt}
    \begin{tabular*}{\columnwidth}{@{\extracolsep{\fill}}lrrrc@{}}
    \toprule
    \textbf{Task} & \textbf{Base} & \textbf{Ours} & \textbf{$\Delta$} & \textbf{Win} \\
    \midrule
    \multicolumn{5}{@{}l}{\textit{Language modeling} $\downarrow$} \\
    C4 ppl      & 86.88 & 82.62 & $-4.90\%$ & 3/3 \\
    WikiText2 ppl     & 64.71 & 60.27 & $-6.86\%$ & 2/3 \\
    \addlinespace[2pt]
    \midrule
    \multicolumn{5}{@{}l}{\textit{Commonsense / reasoning} $\uparrow$} \\
    HellaSwag   & .386 & .398 & $+3.11\%$ & 3/3 \\
    WinoGrande  & .513 & .531 & $+3.51\%$ & 3/3 \\
    PIQA        & .677 & .682 & $+0.74\%$ & 2/3 \\
    BoolQ       & .546 & .558 & $+2.20\%$ & 2/3 \\
    ARC-E       & .425 & .433 & $+1.88\%$ & 2/3 \\
    ARC-C       & .252 & .248 & $-1.59\%$ & 1/3 \\
    \bottomrule
    \end{tabular*}
\end{table}

% ============================================================================
% ============================================================================
% ⬆️⬆️⬆️⬆️ TABLE: Llama 2 1B benchmark results ⬆️⬆️⬆️⬆️
% ============================================================================
% ============================================================================ 
 
\paragraph{Generalization across recent architectures.}
To verify that MoARa is not specific to one model family, we evaluate it on several recent Transformer architectures at the 300--350M scale, all of which depart from the standard multi-head attention used in Llama~2. 
Llama 3.2 and the Qwen family adopt Grouped-Query Attention, while DeepSeek-V2 uses Multi-head Latent Attention; both alter the dimensional ratios and parameter distributions of the \texttt{attn} matrices. 
Despite these differences, the Llama-2-derived allocation rule transfers through the predefined structural mapping described in \S\ref{sec:experimental_setup}, without target-architecture profiling or performance-based rank tuning, and achieves substantial reductions in steps-to-target across all tested architectures, including approximately 43\% on Llama 3.2, 33\% on the Qwen models, and 28\% on DeepSeek-V2 (Appendix~\ref{appx:architecture_results}), with proportional wall-clock reductions.

% ============================================================================
% ============================================================================
% ⬇️⬇️⬇️⬇️ TABLE 2: MEMORY BREAKDOWN ⬇️⬇️⬇️⬇️
% ============================================================================
% ============================================================================
 
\begin{table}[t]
    \centering
    \caption{\textbf{Component-wise memory cost (eager mode, MiB).} Treatment columns show relative change versus baseline. Under standard graph compilation at 7B, the max allocated/reserved overheads reduce to negligible levels. Full breakdown in Appendix~\ref{appx:memory}.}
    \label{tab:memory_breakdown}
    \small
    \setlength{\tabcolsep}{2.2pt}
    \renewcommand{\arraystretch}{1.08}
    \begin{tabular*}{\columnwidth}{@{\extracolsep{\fill}}clrrrr@{}}
    \toprule
    \textbf{Scale} & \textbf{Metric} & \textbf{Base} & \textbf{+Decomp} & \textbf{+Rank} & \textbf{Full} \\
    \midrule
    \multirow{3}{*}{350M}
    & Max alloc.  & 47258 & {\footnotesize +0.05\%} & {\footnotesize +0.00\%} & {\footnotesize +0.12\%} \\
    & Max reserv. & 48154 & {\footnotesize +0.16\%} & {\footnotesize -0.05\%} & {\footnotesize +0.12\%} \\
    & Opt. state  &   539 & {\footnotesize +13.4\%} & {\footnotesize +7.5\%}  & {\footnotesize +20.9\%} \\
    \midrule
    \multirow{3}{*}{1B}
    & Max alloc.  & 51611 & {\footnotesize +0.00\%} & {\footnotesize +0.00\%} & {\footnotesize +0.00\%} \\
    & Max reserv. & 54988 & {\footnotesize +0.29\%} & {\footnotesize +0.26\%} & {\footnotesize +0.57\%} \\
    & Opt. state  &  1652 & {\footnotesize +8.7\%}  & {\footnotesize +9.7\%}  & {\footnotesize +18.4\%} \\
    \midrule
    \multirow{3}{*}{7B}
    & Max alloc.  & 60899 & {\footnotesize +6.39\%} & {\footnotesize +6.39\%} & {\footnotesize +6.39\%} \\
    & Max reserv. & 62122 & {\footnotesize +7.43\%} & {\footnotesize +7.22\%} & {\footnotesize +7.43\%} \\
    & Opt. state  &  7177 & {\footnotesize +5.4\%}  & {\footnotesize +12.0\%} & {\footnotesize +17.4\%} \\
    \bottomrule
    \end{tabular*} 
\end{table}

% ============================================================================
% ============================================================================
% ⬆️⬆️⬆️⬆️ TABLE 2: MEMORY BREAKDOWN ⬆️⬆️⬆️⬆️
% ============================================================================
% ============================================================================
 
\paragraph{Robustness at sequence length 2048.}
The above scaling and architecture results follow the GaLore-standard sequence length of 256.
To test whether the efficiency gains persist beyond this short-context setting, we re-evaluate GaLore with MoARa against standard GaLore at sequence length 2048 on Llama~2 350M.
GaLore with MoARa reaches standard GaLore's final validation perplexity in 35.0\% fewer steps and 33.3\% less wall-clock time, and improves final perplexity by 2.65\% (Appendix~\ref{appx:context_2048}).

% ============================================================================
% ============================================================================
% ⬇️⬇️⬇️⬇️ TABLE 3: CROSS-METHOD ABLATION ⬇️⬇️⬇️⬇️
% ============================================================================
% ============================================================================

\begin{table*}[t]
    \centering
    \caption{\textbf{Cross-method ablation at Llama 2 350M, 60k steps.} 
    Each cell reports final PPL and step-to-/wall-clock-to-target. 
    Target is each host's vanilla final perplexity; ``---'' indicates that the target was not reached within the 60k-step budget.}
    \label{tab:cross_method}
    \small
    \setlength{\tabcolsep}{2pt}
    \renewcommand{\arraystretch}{1.12}
    \begin{tabular*}{\textwidth}{@{\extracolsep{\fill}}lcccccc@{}}
    \toprule
    \textbf{Condition} & \textbf{GaLore} & \textbf{Fira} & \textbf{Q-GaLore} & \textbf{SubTrack++} & \textbf{OSD} & \textbf{LDAdam} \\
    \midrule
    Vanilla
    & \makecell{19.48 \\ 60k ($-$) \\ 7.13h ($-$)}
    & \makecell{16.85 \\ 60k ($-$) \\ 7.10h ($-$)}
    & \makecell{20.64 \\ 60k ($-$) \\ 8.27h ($-$)}
    & \makecell{16.19 \\ 60k ($-$) \\ 6.28h ($-$)}
    & \makecell{21.66 \\ 60k ($-$) \\ 8.78h ($-$)}
    & \makecell{17.60 \\ 60k ($-$) \\ 9.33h ($-$)} \\
    \midrule
    $+$ Rank
    & \makecell{19.27 \\ 47k ($\downarrow$22.0\%) \\ 5.58h ($\downarrow$21.7\%)}
    & \makecell{16.79 \\ 55k ($\downarrow$8.3\%) \\ 6.60h ($\downarrow$7.0\%)}
    & \makecell{\textbf{20.45} \\ \textbf{45k ($\downarrow$25.0\%)} \\ \textbf{6.19h ($\downarrow$25.2\%)}}
    & \makecell{\textbf{15.57} \\ \textbf{43k ($\downarrow$28.3\%)} \\ \textbf{4.50h ($\downarrow$28.4\%)}}
    & \makecell{21.27 \\ 44k ($\downarrow$26.7\%) \\ 6.97h ($\downarrow$20.6\%)}
    & \makecell{\textbf{17.19} \\ \textbf{44k ($\downarrow$26.7\%)} \\ \textbf{6.99h ($\downarrow$25.1\%)}} \\
    \midrule
    $+$ Decom
    & \makecell{19.45 \\ 54k ($\downarrow$9.6\%) \\ 7.06h ($\downarrow$1.0\%)}
    & \makecell{16.75 \\ 53k ($\downarrow$11.7\%) \\ 6.79h ($\downarrow$4.4\%)}
    & \makecell{20.81 \\ --- \\ ---}
    & \makecell{28.29 \\ --- \\ ---}
    & \makecell{20.76 \\ 37k ($\downarrow$38.3\%) \\ 5.41h ($\downarrow$38.4\%)}
    & \makecell{24.01 \\ --- \\ ---} \\
    \midrule
    $+$ Full (MoARa)
    & \makecell{\textbf{19.04} \\ \textbf{41k ($\downarrow$31.7\%)} \\ \textbf{5.29h ($\downarrow$25.8\%)}}
    & \makecell{\textbf{16.70} \\ \textbf{51k ($\downarrow$15.0\%)} \\ \textbf{6.65h ($\downarrow$6.3\%)}}
    & \makecell{20.46 \\ 45k ($\downarrow$25.0\%) \\ 6.26h ($\downarrow$24.3\%)}
    & \makecell{19.17 \\ --- \\ ---}
    & \makecell{\textbf{20.42} \\ \textbf{35k ($\downarrow$41.7\%)} \\ \textbf{5.52h ($\downarrow$37.1\%)}}
    & \makecell{20.24 \\ --- \\ ---} \\
    \bottomrule
    \end{tabular*}
\end{table*}

% ============================================================================
% ============================================================================
% ⬆️⬆️⬆️⬆️ TABLE 3: CROSS-METHOD ABLATION ⬆️⬆️⬆️⬆️
% ============================================================================
% ============================================================================

\paragraph{Stability of the default configuration.}
The default configuration is supported by complementary diagnostics and sensitivity checks.
The cosine-alignment analysis consistently identifies \texttt{mlp.down} as the receiver, while the single-module rank-reduction ablation identifies Q/K as the most robust donors.
The 1k-step profile recovers the same donor/receiver assignment as the longer profiles, and this assignment remains unchanged across profiling strides of 50--400 steps.
For decomposition, $B\in\{32,64,128\}$ forms a near-optimal plateau with little variation across the three settings.
We use $\Delta r=r_{\mathrm{base}}/2$ as a conservative budget-preserving default, while $T_{\mathrm{SVD}}=200$ follows standard GaLore and is not tuned for MoARa.
 
\paragraph{Robustness at the 1B scale and zero-shot downstream evaluation.}
As a limited sanity check, we assess whether the faster pretraining trajectory causes systematic downstream degradation.
We train Llama 2 1B with three random seeds for both standard GaLore and GaLore with MoARa and evaluate the 100k-step checkpoints on a suite of zero-shot downstream benchmarks~\citep{zellers2019hellaswag, sakaguchi2020winogrande, bisk2020piqa, clark2019boolq, clark2018think, lin2022truthfulqa}.
Table~\ref{tab:downstream_1b} reports the results over three seeds. MoARa matches or improves upon standard GaLore on the majority of tasks across seeds on several primary metrics. 
The full benchmark set is reported in Appendix~\ref{appx:downstream}.

\paragraph{Memory overhead in practice.}
These gains are achieved without materially compromising the memory advantage of low-rank pretraining. 
Table~\ref{tab:memory_breakdown} summarizes the component-wise memory cost in eager mode, with the full eager- and compiled-mode breakdown reported in Appendix~~\ref{appx:memory}.
All wall-clock experiments use \texttt{torch.compile}.
Under this compiled setting at the 7B scale, MoARa increases peak allocated memory by only $+0.04$~MiB and peak reserved memory by $0.2\%$ relative to GaLore. 
Persistent optimizer-state memory is a separate, compilation-independent metric.
At 7B, it increases by 1,250~MiB ($+17.4\%$), with the contributions from block-wise decomposition and module-aware rank allocation combining approximately additively. 
Peak allocated and reserved memory are dominated by model parameters and activations, so the eager-mode peak increase at 7B largely disappears under standard graph compilation.

\subsection{Component-wise Cross-Method Transferability}
\label{sec:cross_method}
 
To examine whether MoARa's two components are tied to GaLore's specific projection mechanism, we apply each component independently to five additional low-rank pretraining methods spanning distinct mechanism classes: Fira~\citep{chen2025fira}, Q-GaLore~\citep{zhang2025qgalore}, SubTrack++~\citep{rajabi2025subtrackpp}, LDAdam~\citep{robert2025ldadam}, and OSD~\citep{liang2024osd}. 
Combined with GaLore, this yields a 6 $\times$ 4 ablation at the 350M scale on C4. 
We report step reduction and wall-clock reduction to reach the host's vanilla final perplexity (Table~\ref{tab:cross_method}).
In summary, rank allocation improves all six hosts, whereas the added benefit of decomposition is host-dependent and is limited under Q-GaLore.
 
\paragraph{Module-aware rank allocation transfers as a directionally consistent improvement.}
Across all six host methods, rank allocation alone reaches the host's vanilla final PPL with strictly fewer steps and less wall-clock time. On the five non-GaLore hosts, step reduction ranges from 8.3\% (Fira) to 28.3\% (SubTrack++), and wall-clock reduction ranges from 7.0\% to 28.4\%; the two reductions remain closely matched in each case, since rank allocation only redistributes the projection budget rather than enlarging it. The directional consistency across six independent host mechanisms suggests that the module-wise differential in projection rank sensitivity is shared across the low-rank pretraining family rather than specific to GaLore's update rule.
 
\paragraph{The added benefit of block-wise decomposition is host-dependent.}
Three patterns emerge from Table~\ref{tab:cross_method}: on hosts with slowly or discretely evolving projections (GaLore, Fira, OSD), the full configuration improves over vanilla on all three hosts and gives the best final PPL and steps-to-target.
Its wall-clock gain additionally reflects the per-step cost of decomposition. 
On OSD, the full configuration achieves 41.7\% step and 37.1\% wall-clock reduction.
On Q-GaLore, decomposition alone produces a small increase in final PPL and the full configuration recovers the rank-allocation-only result. 
On SubTrack++ and LDAdam, whose projection bases change continuously during training, decomposition does not combine stably with the evolving subspace despite multiple remedies, and the full configuration does not reach the host's vanilla target within the 60k-step budget. 
We analyze the host-specific mechanisms underlying all three patterns in \S\ref{sec:analysis}.

% ============================================================================
% ============================================================================
% ⬆️⬆️⬆️⬆️ Sec 4. EXPERIMENTS ⬆️⬆️⬆️⬆️
% ============================================================================
% ============================================================================ 

% ============================================================================
% ============================================================================
% ⬇️⬇️⬇️⬇️ Sec 5. ANALYSIS AND DISCUSSION ⬇️⬇️⬇️⬇️
% ============================================================================
% ============================================================================

\section{Analysis and Discussion}
\label{sec:analysis}

The cross-method ablation of \S\ref{sec:cross_method} revealed a sharp asymmetry between MoARa's two components: rank allocation transfers consistently across all six hosts, while decomposition transfers with strong host-dependence. We discuss the mechanisms responsible for this pattern. As an additional sanity check on the allocation itself, MoARa's sensitivity-guided allocation also outperforms 20 random budget-preserving allocations (Appendix~\ref{appx:randomized_allocation}).

\paragraph{Projection rank allocation operates on an architectural property.}
The module-wise alignment scores $S_\tau$ --- measured during a separate baseline profiling run as the cosine similarity between the full gradient and its low-rank reconstruction --- characterize how much meaningful signal each module's gradient retains inside its projection subspace, independent of how the optimizer subsequently chooses to update along that subspace. 
The consistent benefit across the six evaluated hosts therefore suggests that this module-wise asymmetry is not specific to GaLore's update rule.

\paragraph{Decomposition operates on a host-mechanism property, splitting hosts into three groups.}
The behavior observed in \S\ref{sec:cross_method} separates hosts by how their projection bases evolve over training and by the precision constraints of their numerical representations.

\emph{Group A --- slowly or discretely evolving projections (GaLore~\citep{zhao2024galore}, Fira~\citep{chen2025fira}, OSD~\citep{liang2024osd}).} 
Fira maintains an $r$-dimensional optimizer state and applies a full-rank gradient update via norm-based scaling. This scaling partially restores the column-norm information attenuated by projection. In contrast, our decomposition addresses this information loss through a different path by separating the magnitude pre-projection. 
The Fira result in Table~\ref{tab:cross_method} suggests that its norm-based scaling and our decomposition are compatible in the evaluated setting.
OSD updates its projection basis online via PCA on accumulated statistics.
A plausible interpretation is that its basis evolves gradually enough for decomposition to remain effective in our evaluation.

\emph{Group B --- continuously evolving projections (SubTrack++~\citep{rajabi2025subtrackpp}, LDAdam~\citep{robert2025ldadam}).} Both methods update the projection basis continuously during training. SubTrack++ operates on the Grassmannian manifold, while LDAdam applies a per-step subspace correction. 
Under this continuously evolving projection regime, a plausible explanation is that the separately optimized magnitude state becomes harder to coordinate with direction updates induced by the changing projection basis.
Under the remedies we tested, the full configuration does not reach the host's vanilla final perplexity within the 60k-step budget.

\emph{Group C --- quantized host (Q-GaLore~\citep{zhang2025qgalore}).} 
Q-GaLore combines INT4 projection matrices and INT8 weights with an 8-bit Adam optimizer.
In our precision ablation, increasing only the magnitude-branch optimizer state from 8-bit to FP32 does not recover the benefit of decomposition (Appendix~\ref{appx:qgalore_precision}).
This suggests that optimizer-state precision alone does not explain the degradation. 
The other quantized components of Q-GaLore are therefore plausible compatibility constraints for decomposition.

\paragraph{Two-axis summary.}
The cross-method pattern admits a clean two-axis reading. 
First, rank allocation is architecture-driven.
Across the six evaluated hosts, it consistently improves time-to-target, suggesting that the module-wise asymmetry is not specific to GaLore. 
Second, decomposition is \emph{host-mechanism-driven}. 
Across the evaluated hosts, decomposition is most effective when the projection basis evolves slowly or discretely and when the numerical representation is sufficiently precise.
We treat these as observed compatibility factors rather than necessary or sufficient conditions.
These conditions clarify when each component is likely to transfer within the broader design space of low-rank pretraining methods.

\paragraph{Practical trade-offs.}
Relative to the corresponding vanilla low-rank baselines, MoARa adds a one-time profiling stage and a static module-wise rank configuration.
During target training, projection ranks remain fixed, so no online rank search or optimizer-state resizing is required.
The remaining cost comes from per-step decomposition, recomposition, and the separate magnitude branch.
In the 7B GaLore comparison, these additions result in a 4.19\% per-step overhead.
Our method also adds persistent optimizer state, while its compiled peak-memory overhead remains small.

% ============================================================================
% ============================================================================
% ⬆️⬆️⬆️⬆️ Sec 5. ANALYSIS AND DISCUSSION ⬆️⬆️⬆️⬆️
% ============================================================================
% ============================================================================

% ============================================================================
% ============================================================================
% ⬇️⬇️⬇️⬇️ Sec 6. CONCLUSION ⬇️⬇️⬇️⬇️
% ============================================================================
% ============================================================================
 
\section{Conclusion}
\label{sec:conclusion}
 
We presented \textbf{MoARa}, a low-rank LLM pretraining framework. It combines profiling-based rank allocation with block-wise magnitude--direction decomposition. 
Together, these components improve the use of a fixed projection-rank budget and retain per-block magnitude information.
Evaluated across five Transformer architectures from 300M to 7B, MoARa reaches the GaLore baseline's final perplexity in 37\% fewer steps and 34\% less wall-clock time on Llama 2 7B. 
Across all six evaluated hosts, module-aware rank allocation consistently reduces steps-to-target.
On compatible hosts, the full design of MoARa achieves up to 41.7\% step and 37.1\% wall-clock reduction.
Our analysis indicates that projection rank allocation captures a persistent architecture-related pattern within the evaluated settings, while decomposition shows host-dependent compatibility associated with projection-basis evolution and numerical precision.

% ============================================================================
% ============================================================================
% ⬆️⬆️⬆️⬆️ Sec 6. CONCLUSION ⬆️⬆️⬆️⬆️
% ============================================================================
% ============================================================================

\clearpage

% ============================================================================
% ============================================================================
% ⬇️⬇️⬇️⬇️ LIMITATION ⬇️⬇️⬇️⬇️
% ============================================================================
% ============================================================================

\section*{Limitations}
\label{sec:limitations}

While MoARa effectively mitigates key bottlenecks in low-rank LLM pretraining, we discuss boundaries of our work and identify directions for future research.

\paragraph{Absence of a stable dynamic rank-allocation mechanism.}
MoARa allocates ranks statically through a single profiling phase prior to training, and the question of whether a stable \emph{dynamic} rank-allocation mechanism is achievable remains open. A dynamic schedule that, for instance, expands the capacity of receiver modules after warmup is conceptually appealing, but our empirical investigations show that expanding a module's projection rank mid-training requires initializing the first- and second-moment statistics of Adam for newly spanned orthogonal directions, and we observe that this derails the training trajectory. We hypothesize that model parameters rapidly co-adapt to the initial low-rank bottleneck, and abruptly altering this representational capacity perturbs the accumulated moment states. Designing a stable dynamic mechanism that resolves this momentum--subspace mismatch is left to future work; the static allocation studied here should therefore be understood as the simplest stable instance of the broader module-aware allocation principle rather than its only realization.

\paragraph{Cross-method decomposition compatibility for continuously-evolving projections.}
Our cross-method ablation (\S\ref{sec:cross_method}) shows that block-wise magnitude--direction decomposition does not stably combine with hosts that update their projection basis continuously during training (SubTrack++, LDAdam) under the variants we tested. 
Future work could focus on two key questions. 
First, what conditions allow a magnitude state to be co-transported consistently with continuously-evolving projections? 
Second, does this co-transport rule lead to a stable optimization trajectory?

\paragraph{Formal convergence rate guarantees.}
Our theoretical analysis (\S\ref{sec:theoretical_foundations}, Appendix~\ref{appx:theory}) characterizes the information-geometric properties of MoARa's design but does not establish a formal convergence rate guarantee for MoARa-augmented Adam. Such a result would require tracking the moment dynamics of the magnitude and direction branches jointly and is a natural direction for follow-up theoretical work.

\paragraph{Scope of optimizer evaluation.}
Our experiments focus on low-rank optimizer-state projection methods built on AdamW~\citep{loshchilov2019adamw} -style optimization, which provide a controlled setting for isolating the effects of projection-rank allocation and structure-preserving decomposition.
Evaluating specialized optimizers such as 8-bit AdamW or Muon is important, but would introduce optimizer-specific numerical behavior and tuning choices that could confound the main algorithmic comparison.
We therefore do not claim that MoARa outperforms all memory-efficient optimizers.
A controlled cross-family comparison would require matched memory budgets and separate tuning protocols, which we leave to future work.

\paragraph{Scope of generalization.}
Our experiments focus on decoder-only Transformers, and whether the same allocation transfers to encoder--decoder architectures remains unverified. 
We also do not evaluate transfer across different pre-training datasets. 
Extending the 2048-token study across all model scales or to substantially longer contexts was computationally prohibitive within our available budget, so our long-context evidence is limited to the Llama 2 350M setting.

% ============================================================================
% ============================================================================
% ⬆️⬆️⬆️⬆️ LIMITATION ⬆️⬆️⬆️⬆️
% ============================================================================
% ============================================================================

% ============================================================================
% ============================================================================
% ⬇️⬇️⬇️⬇️ REFERENCE & ACKNOWLEDGMENTS ⬇️⬇️⬇️⬇️
% ============================================================================
% ============================================================================

% camera-ready 시에만 사용
\section*{Acknowledgments}
This work was supported by IITP grants funded by the Government of the Republic of Korea (Ministry of Science and ICT) under Grant Nos. RS-2021-II211343 and RS-2025-25442338, and by the "Advanced GPU Utilization Support Program".

\bibstyle{acl_natbib}
\bibliography{references}

@inproceedings{kingma2015adam,
    title = {{Adam}: A Method for Stochastic Optimization},
    author = {Diederik P. Kingma and Jimmy Ba},
    booktitle = {International Conference on Learning Representations},
    year = {2015},
    url = {https://arxiv.org/abs/1412.6980}
}

@inproceedings{zhao2024galore,
    title = {{GaLore}: Memory-Efficient {LLM} Training by Gradient Low-Rank Projection},
    author = {Jiawei Zhao and Zhenyu Zhang and Beidi Chen and Zhangyang Wang and Anima Anandkumar and Yuandong Tian},
    booktitle = {Proceedings of the 41st International Conference on Machine Learning},
    volume = {235},
    pages = {61121--61143},
    series = {Proceedings of Machine Learning Research},
    publisher = {PMLR},
    year = {2024},
    url = {https://proceedings.mlr.press/v235/zhao24s.html}
}

@inproceedings{liang2024osd,
    title = {Memory-Efficient {LLM} Training with Online Subspace Descent},
    author = {Kaizhao Liang and Bo Liu and Lizhang Chen and Qiang Liu},
    booktitle = {Advances in Neural Information Processing Systems},
    volume = {37},
    year = {2024},
    doi = {10.52202/079017-2054},
    url = {https://proceedings.neurips.cc/paper_files/paper/2024/hash/760d09bcc06b949f5ac4b6a918739aa8-Abstract-Conference.html}
}

@inproceedings{robert2025ldadam,
    title = {{LDAdam}: Adaptive Optimization from Low-Dimensional Gradient Statistics},
    author = {Thomas Robert and Mher Safaryan and Ionut-Vlad Modoranu and Dan Alistarh},
    booktitle = {International Conference on Learning Representations},
    year = {2025},
    url = {https://proceedings.iclr.cc/paper_files/paper/2025/hash/0e6eb6f6001748b66348b9b53ad434db-Abstract-Conference.html}
}

@inproceedings{rajabi2025subtrackpp,
    title = {{SubTrack++}: Gradient Subspace Tracking for Scalable {LLM} Training},
    author = {Sahar Rajabi and Nayeema Nonta and Sirisha Rambhatla},
    booktitle = {Advances in Neural Information Processing Systems},
    volume = {38},
    year = {2025},
    doi = {10.52202/085713-1060},
    url = {https://proceedings.neurips.cc/paper_files/paper/2025/hash/2d62cb71e87ae340e3ab0e874befcbc2-Abstract-Conference.html}
}

@inproceedings{zhang2025qgalore,
    title = {{Q-GaLore}: Quantized {GaLore} with {INT4} Projection and Layer-Adaptive Low-Rank Gradients},
    author = {Zhenyu Zhang and Ajay Kumar Jaiswal and Lu Yin and Shiwei Liu and Jiawei Zhao and Yuandong Tian and Zhangyang Wang},
    booktitle = {Conference on Parsimony and Learning},
    volume = {280},
    pages = {1035--1050},
    series = {Proceedings of Machine Learning Research},
    publisher = {PMLR},
    year = {2025},
    url = {https://proceedings.mlr.press/v280/zhang25a.html}
}

@inproceedings{chen2025fira,
    title = {{Fira}: Can We Achieve Full-rank Training of {LLM}s Under Low-rank Constraint?},
    author = {Xi Chen and Kaituo Feng and Changsheng Li and Xunhao Lai and Xiangyu Yue and Ye Yuan and Guoren Wang},
    booktitle = {Advances in Neural Information Processing Systems},
    volume = {38},
    year = {2025},
    doi = {10.52202/085713-4026},
    url = {https://proceedings.neurips.cc/paper_files/paper/2025/hash/aeae2c860cbe283ef73344c4ecd52567-Abstract-Conference.html}
}

@misc{touvron2023llama2openfoundation,
    title = {{Llama 2}: Open Foundation and Fine-Tuned Chat Models},
    author = {Hugo Touvron and Louis Martin and Kevin Stone and Peter Albert and Amjad Almahairi and Yasmine Babaei and Nikolay Bashlykov and Soumya Batra and Prajjwal Bhargava and Shruti Bhosale and Dan Bikel and Lukas Blecher and Cristian Canton Ferrer and Moya Chen and Guillem Cucurull and David Esiobu and Jude Fernandes and Jeremy Fu and Wenyin Fu and Brian Fuller and Cynthia Gao and Vedanuj Goswami and Naman Goyal and Anthony Hartshorn and Saghar Hosseini and Rui Hou and Hakan Inan and Marcin Kardas and Viktor Kerkez and Madian Khabsa and Isabel Kloumann and Artem Korenev and Punit Singh Koura and Marie-Anne Lachaux and Thibaut Lavril and Jenya Lee and Diana Liskovich and Yinghai Lu and Yuning Mao and Xavier Martinet and Todor Mihaylov and Pushkar Mishra and Igor Molybog and Yixin Nie and Andrew Poulton and Jeremy Reizenstein and Rashi Rungta and Kalyan Saladi and Alan Schelten and Ruan Silva and Eric Michael Smith and Ranjan Subramanian and Xiaoqing Ellen Tan and Binh Tang and Ross Taylor and Adina Williams and Jian Xiang Kuan and Puxin Xu and Zheng Yan and Iliyan Zarov and Yuchen Zhang and Angela Fan and Melanie Kambadur and Sharan Narang and Aurelien Rodriguez and Robert Stojnic and Sergey Edunov and Thomas Scialom},
    year = {2023},
    eprint = {2307.09288},
    archivePrefix = {arXiv},
    primaryClass = {cs.CL},
    url = {https://arxiv.org/abs/2307.09288}
}

@misc{grattafiori2024llama3herdmodels,
    title = {The {Llama 3} Herd of Models},
    author = {Aaron Grattafiori and Abhimanyu Dubey and Abhinav Jauhri and Abhinav Pandey and Abhishek Kadian and Ahmad Al-Dahle and Aiesha Letman and Akhil Mathur and Alan Schelten and Alex Vaughan and Amy Yang and Angela Fan and Anirudh Goyal and Anthony Hartshorn and Aobo Yang and Archi Mitra and Archie Sravankumar and Artem Korenev and Arthur Hinsvark and Arun Rao and Aston Zhang and Aurelien Rodriguez and Austen Gregerson and Ava Spataru and Baptiste Roziere and Bethany Biron and Binh Tang and Bobbie Chern and Charlotte Caucheteux and Chaya Nayak and Chloe Bi and Chris Marra and Chris McConnell and Christian Keller and Christophe Touret and Chunyang Wu and Corinne Wong and Cristian Canton Ferrer and Cyrus Nikolaidis and Damien Allonsius and Daniel Song and Danielle Pintz and Danny Livshits and Danny Wyatt and David Esiobu and Dhruv Choudhary and Dhruv Mahajan and Diego Garcia-Olano and Diego Perino and Dieuwke Hupkes and Egor Lakomkin and Ehab AlBadawy and Elina Lobanova and Emily Dinan and Eric Michael Smith and Filip Radenovic and Francisco Guzmán and Frank Zhang and Gabriel Synnaeve and Gabrielle Lee and Georgia Lewis Anderson and Govind Thattai and Graeme Nail and Gregoire Mialon and Guan Pang and Guillem Cucurell and Hailey Nguyen and Hannah Korevaar and Hu Xu and Hugo Touvron and Iliyan Zarov and Imanol Arrieta Ibarra and Isabel Kloumann and Ishan Misra and Ivan Evtimov and Jack Zhang and Jade Copet and Jaewon Lee and Jan Geffert and Jana Vranes and Jason Park and Jay Mahadeokar and Jeet Shah and Jelmer van der Linde and Jennifer Billock and Jenny Hong and Jenya Lee and Jeremy Fu and Jianfeng Chi and Jianyu Huang and Jiawen Liu and Jie Wang and Jiecao Yu and Joanna Bitton and Joe Spisak and Jongsoo Park and Joseph Rocca and Joshua Johnstun and Joshua Saxe and Junteng Jia and Kalyan Vasuden Alwala and Karthik Prasad and Kartikeya Upasani and Kate Plawiak and Ke Li and Kenneth Heafield and Kevin Stone and Khalid El-Arini and Krithika Iyer and Kshitiz Malik and Kuenley Chiu and Kunal Bhalla and Kushal Lakhotia and Lauren Rantala-Yeary and Laurens van der Maaten and Lawrence Chen and Liang Tan and Liz Jenkins and Louis Martin and Lovish Madaan and Lubo Malo and Lukas Blecher and Lukas Landzaat and Luke de Oliveira and Madeline Muzzi and Mahesh Pasupuleti and Mannat Singh and Manohar Paluri and Marcin Kardas and Maria Tsimpoukelli and Mathew Oldham and Mathieu Rita and Maya Pavlova and Melanie Kambadur and Mike Lewis and Min Si and Mitesh Kumar Singh and Mona Hassan and Naman Goyal and Narjes Torabi and Nikolay Bashlykov and Nikolay Bogoychev and Niladri Chatterji and Ning Zhang and Olivier Duchenne and Onur Çelebi and Patrick Alrassy and Pengchuan Zhang and Pengwei Li and Petar Vasic and Peter Weng and Prajjwal Bhargava and Pratik Dubal and Praveen Krishnan and Punit Singh Koura and Puxin Xu and Qing He and Qingxiao Dong and Ragavan Srinivasan and Raj Ganapathy and Ramon Calderer and Ricardo Silveira Cabral and Robert Stojnic and Roberta Raileanu and Rohan Maheswari and Rohit Girdhar and Rohit Patel and Romain Sauvestre and Ronnie Polidoro and Roshan Sumbaly and Ross Taylor and Ruan Silva and Rui Hou and Rui Wang and Saghar Hosseini and Sahana Chennabasappa and Sanjay Singh and Sean Bell and Seohyun Sonia Kim and Sergey Edunov and Shaoliang Nie and Sharan Narang and Sharath Raparthy and Sheng Shen and Shengye Wan and Shruti Bhosale and Shun Zhang and Simon Vandenhende and Soumya Batra and Spencer Whitman and Sten Sootla and Stephane Collot and Suchin Gururangan and Sydney Borodinsky and Tamar Herman and Tara Fowler and Tarek Sheasha and Thomas Georgiou and Thomas Scialom and Tobias Speckbacher and Todor Mihaylov and Tong Xiao and Ujjwal Karn and Vedanuj Goswami and Vibhor Gupta and Vignesh Ramanathan and Viktor Kerkez and Vincent Gonguet and Virginie Do and Vish Vogeti and Vítor Albiero and Vladan Petrovic and Weiwei Chu and Wenhan Xiong and Wenyin Fu and Whitney Meers and Xavier Martinet and Xiaodong Wang and Xiaofang Wang and Xiaoqing Ellen Tan and Xide Xia and Xinfeng Xie and Xuchao Jia and Xuewei Wang and Yaelle Goldschlag and Yashesh Gaur and Yasmine Babaei and Yi Wen and Yiwen Song and Yuchen Zhang and Yue Li and Yuning Mao and Zacharie Delpierre Coudert and Zheng Yan and Zhengxing Chen and Zoe Papakipos and Aaditya Singh and Aayushi Srivastava and Abha Jain and Adam Kelsey and Adam Shajnfeld and Adithya Gangidi and Adolfo Victoria and Ahuva Goldstand and Ajay Menon and Ajay Sharma and Alex Boesenberg and Alexei Baevski and Allie Feinstein and Amanda Kallet and Amit Sangani and Amos Teo and Anam Yunus and Andrei Lupu and Andres Alvarado and Andrew Caples and Andrew Gu and Andrew Ho and Andrew Poulton and Andrew Ryan and Ankit Ramchandani and Annie Dong and Annie Franco and Anuj Goyal and Aparajita Saraf and Arkabandhu Chowdhury and Ashley Gabriel and Ashwin Bharambe and Assaf Eisenman and Azadeh Yazdan and Beau James and Ben Maurer and Benjamin Leonhardi and Bernie Huang and Beth Loyd and Beto De Paola and Bhargavi Paranjape and Bing Liu and Bo Wu and Boyu Ni and Braden Hancock and Bram Wasti and Brandon Spence and Brani Stojkovic and Brian Gamido and Britt Montalvo and Carl Parker and Carly Burton and Catalina Mejia and Ce Liu and Changhan Wang and Changkyu Kim and Chao Zhou and Chester Hu and Ching-Hsiang Chu and Chris Cai and Chris Tindal and Christoph Feichtenhofer and Cynthia Gao and Damon Civin and Dana Beaty and Daniel Kreymer and Daniel Li and David Adkins and David Xu and Davide Testuggine and Delia David and Devi Parikh and Diana Liskovich and Didem Foss and Dingkang Wang and Duc Le and Dustin Holland and Edward Dowling and Eissa Jamil and Elaine Montgomery and Eleonora Presani and Emily Hahn and Emily Wood and Eric-Tuan Le and Erik Brinkman and Esteban Arcaute and Evan Dunbar and Evan Smothers and Fei Sun and Felix Kreuk and Feng Tian and Filippos Kokkinos and Firat Ozgenel and Francesco Caggioni and Frank Kanayet and Frank Seide and Gabriela Medina Florez and Gabriella Schwarz and Gada Badeer and Georgia Swee and Gil Halpern and Grant Herman and Grigory Sizov and Guangyi and Zhang and Guna Lakshminarayanan and Hakan Inan and Hamid Shojanazeri and Han Zou and Hannah Wang and Hanwen Zha and Haroun Habeeb and Harrison Rudolph and Helen Suk and Henry Aspegren and Hunter Goldman and Hongyuan Zhan and Ibrahim Damlaj and Igor Molybog and Igor Tufanov and Ilias Leontiadis and Irina-Elena Veliche and Itai Gat and Jake Weissman and James Geboski and James Kohli and Janice Lam and Japhet Asher and Jean-Baptiste Gaya and Jeff Marcus and Jeff Tang and Jennifer Chan and Jenny Zhen and Jeremy Reizenstein and Jeremy Teboul and Jessica Zhong and Jian Jin and Jingyi Yang and Joe Cummings and Jon Carvill and Jon Shepard and Jonathan McPhie and Jonathan Torres and Josh Ginsburg and Junjie Wang and Kai Wu and Kam Hou U and Karan Saxena and Kartikay Khandelwal and Katayoun Zand and Kathy Matosich and Kaushik Veeraraghavan and Kelly Michelena and Keqian Li and Kiran Jagadeesh and Kun Huang and Kunal Chawla and Kyle Huang and Lailin Chen and Lakshya Garg and Lavender A and Leandro Silva and Lee Bell and Lei Zhang and Liangpeng Guo and Licheng Yu and Liron Moshkovich and Luca Wehrstedt and Madian Khabsa and Manav Avalani and Manish Bhatt and Martynas Mankus and Matan Hasson and Matthew Lennie and Matthias Reso and Maxim Groshev and Maxim Naumov and Maya Lathi and Meghan Keneally and Miao Liu and Michael L. Seltzer and Michal Valko and Michelle Restrepo and Mihir Patel and Mik Vyatskov and Mikayel Samvelyan and Mike Clark and Mike Macey and Mike Wang and Miquel Jubert Hermoso and Mo Metanat and Mohammad Rastegari and Munish Bansal and Nandhini Santhanam and Natascha Parks and Natasha White and Navyata Bawa and Nayan Singhal and Nick Egebo and Nicolas Usunier and Nikhil Mehta and Nikolay Pavlovich Laptev and Ning Dong and Norman Cheng and Oleg Chernoguz and Olivia Hart and Omkar Salpekar and Ozlem Kalinli and Parkin Kent and Parth Parekh and Paul Saab and Pavan Balaji and Pedro Rittner and Philip Bontrager and Pierre Roux and Piotr Dollar and Polina Zvyagina and Prashant Ratanchandani and Pritish Yuvraj and Qian Liang and Rachad Alao and Rachel Rodriguez and Rafi Ayub and Raghotham Murthy and Raghu Nayani and Rahul Mitra and Rangaprabhu Parthasarathy and Raymond Li and Rebekkah Hogan and Robin Battey and Rocky Wang and Russ Howes and Ruty Rinott and Sachin Mehta and Sachin Siby and Sai Jayesh Bondu and Samyak Datta and Sara Chugh and Sara Hunt and Sargun Dhillon and Sasha Sidorov and Satadru Pan and Saurabh Mahajan and Saurabh Verma and Seiji Yamamoto and Sharadh Ramaswamy and Shaun Lindsay and Shaun Lindsay and Sheng Feng and Shenghao Lin and Shengxin Cindy Zha and Shishir Patil and Shiva Shankar and Shuqiang Zhang and Shuqiang Zhang and Sinong Wang and Sneha Agarwal and Soji Sajuyigbe and Soumith Chintala and Stephanie Max and Stephen Chen and Steve Kehoe and Steve Satterfield and Sudarshan Govindaprasad and Sumit Gupta and Summer Deng and Sungmin Cho and Sunny Virk and Suraj Subramanian and Sy Choudhury and Sydney Goldman and Tal Remez and Tamar Glaser and Tamara Best and Thilo Koehler and Thomas Robinson and Tianhe Li and Tianjun Zhang and Tim Matthews and Timothy Chou and Tzook Shaked and Varun Vontimitta and Victoria Ajayi and Victoria Montanez and Vijai Mohan and Vinay Satish Kumar and Vishal Mangla and Vlad Ionescu and Vlad Poenaru and Vlad Tiberiu Mihailescu and Vladimir Ivanov and Wei Li and Wenchen Wang and Wenwen Jiang and Wes Bouaziz and Will Constable and Xiaocheng Tang and Xiaojian Wu and Xiaolan Wang and Xilun Wu and Xinbo Gao and Yaniv Kleinman and Yanjun Chen and Ye Hu and Ye Jia and Ye Qi and Yenda Li and Yilin Zhang and Ying Zhang and Yossi Adi and Youngjin Nam and Yu and Wang and Yu Zhao and Yuchen Hao and Yundi Qian and Yunlu Li and Yuzi He and Zach Rait and Zachary DeVito and Zef Rosnbrick and Zhaoduo Wen and Zhenyu Yang and Zhiwei Zhao and Zhiyu Ma},
    year = {2024},
    eprint = {2407.21783},
    archivePrefix = {arXiv},
    primaryClass = {cs.AI},
    url = {https://arxiv.org/abs/2407.21783}
}

@misc{qwen2025qwen25technicalreport,
    title = {{Qwen2.5} Technical Report},
    author = {An Yang and Baosong Yang and Beichen Zhang and Binyuan Hui and Bo Zheng and Bowen Yu and Chengyuan Li and Dayiheng Liu and Fei Huang and Haoran Wei and Huan Lin and Jian Yang and Jianhong Tu and Jianwei Zhang and Jianxin Yang and Jiaxi Yang and Jingren Zhou and Junyang Lin and Kai Dang and Keming Lu and Keqin Bao and Kexin Yang and Le Yu and Mei Li and Mingfeng Xue and Pei Zhang and Qin Zhu and Rui Men and Runji Lin and Tianhao Li and Tianyi Tang and Tingyu Xia and Xingzhang Ren and Xuancheng Ren and Yang Fan and Yang Su and Yichang Zhang and Yu Wan and Yuqiong Liu and Zeyu Cui and Zhenru Zhang and Zihan Qiu},
    year = {2024},
    eprint = {2412.15115},
    archivePrefix = {arXiv},
    primaryClass = {cs.CL},
    url = {https://arxiv.org/abs/2412.15115}
}

@misc{yang2025qwen3technicalreport,
    title = {{Qwen3} Technical Report},
    author = {An Yang and Anfeng Li and Baosong Yang and Beichen Zhang and Binyuan Hui and Bo Zheng and Bowen Yu and Chang Gao and Chengen Huang and Chenxu Lv and Chujie Zheng and Dayiheng Liu and Fan Zhou and Fei Huang and Feng Hu and Hao Ge and Haoran Wei and Huan Lin and Jialong Tang and Jian Yang and Jianhong Tu and Jianwei Zhang and Jianxin Yang and Jiaxi Yang and Jing Zhou and Jingren Zhou and Junyang Lin and Kai Dang and Keqin Bao and Kexin Yang and Le Yu and Lianghao Deng and Mei Li and Mingfeng Xue and Mingze Li and Pei Zhang and Peng Wang and Qin Zhu and Rui Men and Ruize Gao and Shixuan Liu and Shuang Luo and Tianhao Li and Tianyi Tang and Wenbiao Yin and Xingzhang Ren and Xinyu Wang and Xinyu Zhang and Xuancheng Ren and Yang Fan and Yang Su and Yichang Zhang and Yinger Zhang and Yu Wan and Yuqiong Liu and Zekun Wang and Zeyu Cui and Zhenru Zhang and Zhipeng Zhou and Zihan Qiu},
    year = {2025},
    eprint = {2505.09388},
    archivePrefix = {arXiv},
    primaryClass = {cs.CL},
    url = {https://arxiv.org/abs/2505.09388}
}

@misc{deepseekai2024deepseekv2strongeconomicalefficient,
    title = {{DeepSeek-V2}: A Strong, Economical, and Efficient Mixture-of-Experts Language Model},
    author = {DeepSeek-AI and Aixin Liu and Bei Feng and Bin Wang and Bingxuan Wang and Bo Liu and Chenggang Zhao and Chengqi Dengr and Chong Ruan and Damai Dai and Daya Guo and Dejian Yang and Deli Chen and Dongjie Ji and Erhang Li and Fangyun Lin and Fuli Luo and Guangbo Hao and Guanting Chen and Guowei Li and H. Zhang and Hanwei Xu and Hao Yang and Haowei Zhang and Honghui Ding and Huajian Xin and Huazuo Gao and Hui Li and Hui Qu and J. L. Cai and Jian Liang and Jianzhong Guo and Jiaqi Ni and Jiashi Li and Jin Chen and Jingyang Yuan and Junjie Qiu and Junxiao Song and Kai Dong and Kaige Gao and Kang Guan and Lean Wang and Lecong Zhang and Lei Xu and Leyi Xia and Liang Zhao and Liyue Zhang and Meng Li and Miaojun Wang and Mingchuan Zhang and Minghua Zhang and Minghui Tang and Mingming Li and Ning Tian and Panpan Huang and Peiyi Wang and Peng Zhang and Qihao Zhu and Qinyu Chen and Qiushi Du and R. J. Chen and R. L. Jin and Ruiqi Ge and Ruizhe Pan and Runxin Xu and Ruyi Chen and S. S. Li and Shanghao Lu and Shangyan Zhou and Shanhuang Chen and Shaoqing Wu and Shengfeng Ye and Shirong Ma and Shiyu Wang and Shuang Zhou and Shuiping Yu and Shunfeng Zhou and Size Zheng and T. Wang and Tian Pei and Tian Yuan and Tianyu Sun and W. L. Xiao and Wangding Zeng and Wei An and Wen Liu and Wenfeng Liang and Wenjun Gao and Wentao Zhang and X. Q. Li and Xiangyue Jin and Xianzu Wang and Xiao Bi and Xiaodong Liu and Xiaohan Wang and Xiaojin Shen and Xiaokang Chen and Xiaosha Chen and Xiaotao Nie and Xiaowen Sun and Xiaoxiang Wang and Xin Liu and Xin Xie and Xingkai Yu and Xinnan Song and Xinyi Zhou and Xinyu Yang and Xuan Lu and Xuecheng Su and Y. Wu and Y. K. Li and Y. X. Wei and Y. X. Zhu and Yanhong Xu and Yanping Huang and Yao Li and Yao Zhao and Yaofeng Sun and Yaohui Li and Yaohui Wang and Yi Zheng and Yichao Zhang and Yiliang Xiong and Yilong Zhao and Ying He and Ying Tang and Yishi Piao and Yixin Dong and Yixuan Tan and Yiyuan Liu and Yongji Wang and Yongqiang Guo and Yuchen Zhu and Yuduan Wang and Yuheng Zou and Yukun Zha and Yunxian Ma and Yuting Yan and Yuxiang You and Yuxuan Liu and Z. Z. Ren and Zehui Ren and Zhangli Sha and Zhe Fu and Zhen Huang and Zhen Zhang and Zhenda Xie and Zhewen Hao and Zhihong Shao and Zhiniu Wen and Zhipeng Xu and Zhongyu Zhang and Zhuoshu Li and Zihan Wang and Zihui Gu and Zilin Li and Ziwei Xie},
    year = {2024},
    eprint = {2405.04434},
    archivePrefix = {arXiv},
    primaryClass = {cs.CL},
    url = {https://arxiv.org/abs/2405.04434}
}

@inproceedings{zhang2025breakingfrozensubspaceimportance,
    title = {Breaking the Frozen Subspace: Importance Sampling for Low-Rank Optimization in {LLM} Pretraining},
    author = {Haochen Zhang and Junze Yin and Guanchu Wang and Zirui Liu and Lin F. Yang and Tianyi Zhang and Anshumali Shrivastava and Vladimir Braverman},
    booktitle = {Advances in Neural Information Processing Systems},
    volume = {38},
    year = {2025},
    doi = {10.52202/085713-0343},
    url = {https://proceedings.neurips.cc/paper_files/paper/2025/hash/0edd294b7632fc96903abfbf3b264fc1-Abstract-Conference.html}
}

@inproceedings{refael2025sumo,
    title = {{SUMO}: Subspace-Aware Moment-Orthogonalization for Accelerating Memory-Efficient {LLM} Training},
    author = {Yehonathan Refael and Guy Smorodinsky and Tom Tirer and Ofir Lindenbaum},
    booktitle = {Advances in Neural Information Processing Systems},
    volume = {38},
    year = {2025},
    doi = {10.52202/085713-4926},
    url = {https://proceedings.neurips.cc/paper_files/paper/2025/hash/d85a66edadd443ac2350e93c0287f4f9-Abstract-Conference.html}
}

@inproceedings{liu2025cola,
    title = {{CoLA}: Compute-Efficient Pre-Training of {LLM}s via Low-Rank Activation},
    author = {Ziyue Liu and Ruijie Zhang and Zhengyang Wang and Mingsong Yan and Zi Yang and Paul D. Hovland and Bogdan Nicolae and Franck Cappello and Sui Tang and Zheng Zhang},
    booktitle = {Proceedings of the 2025 Conference on Empirical Methods in Natural Language Processing},
    pages = {4627--4645},
    publisher = {Association for Computational Linguistics},
    year = {2025},
    doi = {10.18653/v1/2025.emnlp-main.230},
    url = {https://aclanthology.org/2025.emnlp-main.230/}
}

@inproceedings{zhu2025apollo,
    title = {{APOLLO}: {SGD}-like Memory, {AdamW}-level Performance},
    author = {Hanqing Zhu and Zhenyu Zhang and Wenyan Cong and Xi Liu and Sem Park and Vikas Chandra and Bo Long and David Z. Pan and Zhangyang Wang and Jinwon Lee},
    booktitle = {Proceedings of Machine Learning and Systems},
    volume = {7},
    year = {2025},
    url = {https://proceedings.mlsys.org/paper_files/paper/2025/hash/437bc4ccafd3fc6d4289bd10940be42b-Abstract-Conference.html}
}

@inproceedings{loshchilov2019adamw,
    title = {Decoupled Weight Decay Regularization},
    author = {Ilya Loshchilov and Frank Hutter},
    booktitle = {International Conference on Learning Representations},
    year = {2019},
    url = {https://arxiv.org/abs/1711.05101}
}

@inproceedings{salimans2016weightnorm,
    title = {Weight Normalization: A Simple Reparameterization to Accelerate Training of Deep Neural Networks},
    author = {Tim Salimans and Diederik P. Kingma},
    booktitle = {Advances in Neural Information Processing Systems},
    volume = {29},
    year = {2016},
    url = {https://proceedings.neurips.cc/paper_files/paper/2016/hash/ed265bc903a5a097f61d3ec064d96d2e-Abstract.html}
}

@inproceedings{liu2024dora,
    title = {{DoRA}: Weight-Decomposed Low-Rank Adaptation},
    author = {Shih-Yang Liu and Chien-Yi Wang and Hongxu Yin and Pavlo Molchanov and Yu-Chiang Frank Wang and Kwang-Ting Cheng and Min-Hung Chen},
    booktitle = {Proceedings of the 41st International Conference on Machine Learning},
    volume = {235},
    pages = {32100--32121},
    series = {Proceedings of Machine Learning Research},
    publisher = {PMLR},
    year = {2024},
    url = {https://proceedings.mlr.press/v235/liu24bn.html}
}

@inproceedings{shazeer2018adafactor,
    title = {Adafactor: Adaptive Learning Rates with Sublinear Memory Cost},
    author = {Noam Shazeer and Mitchell Stern},
    booktitle = {Proceedings of the 35th International Conference on Machine Learning},
    volume = {80},
    pages = {4596--4604},
    series = {Proceedings of Machine Learning Research},
    publisher = {PMLR},
    year = {2018},
    url = {https://proceedings.mlr.press/v80/shazeer18a.html}
}

@misc{wang2025vlorp,
    title = {Memory-Efficient {LLM} Training by Various-Grained Low-Rank Projection of Gradients},
    author = {Yezhen Wang and Zhouhao Yang and Brian K Chen and Fanyi Pu and Bo Li and Tianyu Gao and Kenji Kawaguchi},
    year = {2025},
    eprint = {2505.01744},
    archivePrefix = {arXiv},
    primaryClass = {cs.LG},
    url = {https://arxiv.org/abs/2505.01744}
}

@inproceedings{li2024lorap,
    title = {{LoRAP}: Transformer Sub-Layers Deserve Differentiated Structured Compression for Large Language Models},
    author = {Guangyan Li and Yongqiang Tang and Wensheng Zhang},
    booktitle = {Proceedings of the 41st International Conference on Machine Learning},
    volume = {235},
    pages = {28657--28672},
    series = {Proceedings of Machine Learning Research},
    publisher = {PMLR},
    year = {2024},
    url = {https://proceedings.mlr.press/v235/li24bi.html}
}

@inproceedings{wong2025a3,
    title = {{A3} : an Analytical Low-Rank Approximation Framework for Attention},
    author = {Jeffrey T. H. Wong and Cheng Zhang and Xinye Cao and Pedro Gimenes and Christos-Savvas Bouganis and George Anthony Constantinides and Wayne Luk and Yiren Zhao},
    booktitle = {Proceedings of the 43rd International Conference on Machine Learning},
    series    = {Proceedings of Machine Learning Research},
    volume    = {306},
    year      = {2026},
    publisher = {PMLR},
    url       = {https://openreview.net/forum?id=aeeo8ZAftQ}
}

@inproceedings{zhang2023adalora,
    title = {{AdaLoRA}: Adaptive Budget Allocation for Parameter-Efficient Fine-Tuning},
    author = {Qingru Zhang and Minshuo Chen and Alexander Bukharin and Pengcheng He and Yu Cheng and Weizhu Chen and Tuo Zhao},
    booktitle = {International Conference on Learning Representations},
    year = {2023},
    url = {https://openreview.net/forum?id=lq62uWRJjiY}
}

@inproceedings{liu2024alora,
    title = {{ALoRA}: Allocating Low-Rank Adaptation for Fine-tuning Large Language Models},
    author = {Zequan Liu and Jiawen Lyn and Wei Zhu and Xing Tian and Yvette Graham},
    booktitle = {Proceedings of the 2024 Conference of the North American Chapter of the Association for Computational Linguistics: Human Language Technologies (Volume 1: Long Papers)},
    pages = {622--641},
    publisher = {Association for Computational Linguistics},
    year = {2024},
    doi = {10.18653/v1/2024.naacl-long.35},
    url = {https://aclanthology.org/2024.naacl-long.35/}
}

@misc{xv2025ara,
    title = {{ARA}: Adaptive Rank Allocation for Efficient Large Language Model {SVD} Compression},
    author = {Lin Xv and Jingsheng Gao and Xian Gao and Ting Liu and Yuzhuo Fu},
    year = {2025},
    eprint = {2510.19389},
    archivePrefix = {arXiv},
    primaryClass = {cs.LG},
    url = {https://arxiv.org/abs/2510.19389}
}

@article{raffel2023exploringlimitstransferlearning,
    title = {Exploring the Limits of Transfer Learning with a Unified Text-to-Text Transformer},
    author = {Colin Raffel and Noam Shazeer and Adam Roberts and Katherine Lee and Sharan Narang and Michael Matena and Yanqi Zhou and Wei Li and Peter J. Liu},
    journal = {Journal of Machine Learning Research},
    volume = {21},
    number = {140},
    pages = {1--67},
    year = {2020},
    url = {https://jmlr.org/papers/v21/20-074.html}
}

@inproceedings{zellers2019hellaswag,
    title = {{HellaSwag}: Can a Machine Really Finish Your Sentence?},
    author = {Rowan Zellers and Ari Holtzman and Yonatan Bisk and Ali Farhadi and Yejin Choi},
    booktitle = {Proceedings of the 57th Annual Meeting of the Association for Computational Linguistics},
    pages = {4791--4800},
    publisher = {Association for Computational Linguistics},
    year = {2019},
    doi = {10.18653/v1/P19-1472},
    url = {https://aclanthology.org/P19-1472/}
}

@article{sakaguchi2020winogrande,
    title = {{WinoGrande}: An Adversarial Winograd Schema Challenge at Scale},
    author = {Keisuke Sakaguchi and Ronan Le Bras and Chandra Bhagavatula and Yejin Choi},
    journal = {Proceedings of the AAAI Conference on Artificial Intelligence},
    volume = {34},
    number = {05},
    pages = {8732--8740},
    year = {2020},
    doi = {10.1609/aaai.v34i05.6399},
    url = {https://ojs.aaai.org/index.php/AAAI/article/view/6399}
}

@article{bisk2020piqa,
    title = {{PIQA}: Reasoning about Physical Commonsense in Natural Language},
    author = {Yonatan Bisk and Rowan Zellers and Ronan Le Bras and Jianfeng Gao and Yejin Choi},
    journal = {Proceedings of the AAAI Conference on Artificial Intelligence},
    volume = {34},
    number = {05},
    pages = {7432--7439},
    year = {2020},
    doi = {10.1609/aaai.v34i05.6239},
    url = {https://ojs.aaai.org/index.php/AAAI/article/view/6239}
}

@inproceedings{clark2019boolq,
    title = {{BoolQ}: Exploring the Surprising Difficulty of Natural Yes/No Questions},
    author = {Christopher Clark and Kenton Lee and Ming-Wei Chang and Tom Kwiatkowski and Michael Collins and Kristina Toutanova},
    booktitle = {Proceedings of the 2019 Conference of the North American Chapter of the Association for Computational Linguistics: Human Language Technologies, Volume 1 (Long and Short Papers)},
    pages = {2924--2936},
    publisher = {Association for Computational Linguistics},
    year = {2019},
    doi = {10.18653/v1/N19-1300},
    url = {https://aclanthology.org/N19-1300/}
}

@misc{clark2018think,
    title = {Think you have Solved Question Answering? Try {ARC}, the {AI2} Reasoning Challenge},
    author = {Peter Clark and Isaac Cowhey and Oren Etzioni and Tushar Khot and Ashish Sabharwal and Carissa Schoenick and Oyvind Tafjord},
    year = {2018},
    eprint = {1803.05457},
    archivePrefix = {arXiv},
    primaryClass = {cs.AI},
    url = {https://arxiv.org/abs/1803.05457}
}

@inproceedings{lin2022truthfulqa,
    title = {{TruthfulQA}: Measuring How Models Mimic Human Falsehoods},
    author = {Stephanie Lin and Jacob Hilton and Owain Evans},
    booktitle = {Proceedings of the 60th Annual Meeting of the Association for Computational Linguistics (Volume 1: Long Papers)},
    pages = {3214--3252},
    publisher = {Association for Computational Linguistics},
    year = {2022},
    doi = {10.18653/v1/2022.acl-long.229},
    url = {https://aclanthology.org/2022.acl-long.229/}
}

@inproceedings{merity2017pointer,
    title = {Pointer Sentinel Mixture Models},
    author = {Stephen Merity and Caiming Xiong and James Bradbury and Richard Socher},
    booktitle = {International Conference on Learning Representations},
    year = {2017},
    url = {https://openreview.net/forum?id=Byj72udxe}
}

% ============================================================================
% ============================================================================
% ⬆️⬆️⬆️⬆️ REFERENCE & ACKNOWLEDGMENTS ⬆️⬆️⬆️⬆️
% ============================================================================
% ============================================================================

\clearpage

% ============================================================================
% ============================================================================
% ⬇️⬇️⬇️⬇️ APPENDIX ⬇️⬇️⬇️⬇️
% ============================================================================
% ============================================================================

\appendix

% ============================================================================
% ============================================================================
% ⬇️⬇️⬇️⬇️ APPENDIX A ⬇️⬇️⬇️⬇️
% ============================================================================
% ============================================================================

\section{Positioning of MoARa}
\label{app_sec:positioning}
\label{appx:positioning}

Table~\ref{tab:positioning} summarizes how MoARa positions relative to the closest concurrent methods along five axes: architecture preservation, optimizer-state memory regime, module-aware rank allocation, magnitude--direction decomposition, and primary design objective. The memory regime is classified as \emph{low-rank} (compression to an $r$-dimensional gradient subspace), \emph{SGD-level} (per-channel/per-tensor scalars via random projection rather than SVD), or \emph{indirect} (memory reduced primarily through architectural changes). For axis context we also include DoRA~\citep{liu2024dora} and AdaLoRA~\citep{zhang2023adalora}, two fine-tuning methods.

\begin{table*}[t]
    \centering
    \caption{\textbf{Positioning of MoARa relative to closest concurrent works.}}
    \label{tab:positioning}
    \small
    \setlength{\tabcolsep}{3pt}
    \renewcommand{\arraystretch}{1.15}
    
    \begin{tabular*}{\textwidth}{@{\extracolsep{\fill}}lccccc@{}}
    \toprule
    \textbf{Method} & \makecell{Arch.\\preserved?} & \makecell{Opt.-state\\regime} & \makecell{Module-aware\\allocation?} & \makecell{Mag.--dir.\\decomp.?} & \makecell{Primary\\objective} \\
    \midrule
    GaLore        & \checkmark & low-rank   & ${\times}$ & ${\times}$ & memory reduction \\
    LDAdam        & \checkmark & low-rank   & ${\times}$ & ${\times}$ & adaptive low-D stats \\
    OSD           & \checkmark & low-rank   & ${\times}$ & ${\times}$ & online subspace update \\
    SubTrack++    & \checkmark & low-rank   & ${\times}$ & ${\times}$ & Grassmannian tracking \\
    Q-GaLore      & \checkmark & low-rank   & ${\times}$ & ${\times}$ & quant.\ + memory \\
    Fira          & \checkmark & low-rank   & ${\times}$ & ${\times}$ & full-rank update under low-rank state \\
    APOLLO        & \checkmark & SGD-level  & ${\times}$ & ${\times}$ & extreme memory reduction \\
    CoLA          & ${\times}$ & indirect & ${\times}$ & ${\times}$ & architectural compression \\
    DoRA          & \checkmark & (fine-tune) & ${\times}$ & \checkmark (weight-side) & fine-tuning capacity \\
    AdaLoRA       & \checkmark & (fine-tune) & \checkmark (LoRA budget) & ${\times}$ & fine-tuning budget \\
    \midrule
    \textbf{MoARa (ours)} & \checkmark & low-rank & \checkmark (rank budget) & \checkmark (gradient-side) & \makecell{convergence speed\\under a fixed projection-rank budget} \\
    \bottomrule
    \end{tabular*}
\end{table*}

% ============================================================================
% ============================================================================
% ⬆️⬆️⬆️⬆️ APPENDIX A ⬆️⬆️⬆️⬆️
% ============================================================================
% ============================================================================

% ============================================================================
% ============================================================================
% ⬇️⬇️⬇️⬇️ APPENDIX B ⬇️⬇️⬇️⬇️
% ============================================================================
% ============================================================================

\section{Block-wise Decomposition: Method Details and Analysis}
\label{app_sec:block_decomposition}

This section provides the implementation details, justification, and theoretical analysis underlying the block-wise magnitude--direction decomposition introduced in \S\ref{sec:block_decomposition} of the main text.

\subsection{Implementation of Block-wise Decomposition on Flattened Tensors}
\label{sec:appendix_flatten}

For implementation, we apply block-wise decomposition by preserving the first axis and flattening all remaining axes.
This gives a unified rule for all target linear layers.
In other words, we keep the row axis unchanged and apply blocking only along the column side.

For \(G \in \mathbb{R}^{s_0 \times s_1 \times \cdots \times s_{r-1}}\), we define
\[
{G}_{\mathrm{flat}}=\mathrm{reshape}(G,m,n),\;\;
m=s_0,\;\;
n=\prod_{\ell=1}^{r-1} s_\ell.
\]

If $B\nmid n$, we right-zero-pad the flattened column dimension to
\[
n_{\mathrm{pad}}
=
B\left\lceil\frac{n}{B}\right\rceil
\]
and denote the padded tensor by
\[
G_{\mathrm{pad}}
=
\operatorname{PadRight}
\left(
G_{\mathrm{flat}},
n_{\mathrm{pad}}-n
\right)
\in\mathbb{R}^{m\times n_{\mathrm{pad}}}.
\]
The padding is used only to complete the final block.

For block size $B$, each padded row is partitioned into
$k=n_{\mathrm{pad}}/B$ contiguous blocks:
\[
g_{i,b}
=
(G_{\mathrm{pad}})_{i,\,bB:(b+1)B}
\in\mathbb{R}^{B},
\ \
b=0,\ldots,k-1.
\]

Block magnitude and direction are defined as
\[
m_{i,b}=\|g_{i,b}\|_2,\qquad
v_{i,b}=\frac{g_{i,b}}{\|g_{i,b}\|_2+\epsilon},
\]
where \(\epsilon>0\) is a small constant for numerical stability.

In our implementation, \(\epsilon\) appears only in the normalization denominator.
At recomposition, we multiply magnitude and direction directly:
\[
\hat g_{i,b}=m_{i,b}v_{i,b}
=\frac{\|g_{i,b}\|_2}{\|g_{i,b}\|_2+\epsilon}\,g_{i,b}.
\]

Hence \(\hat g_{i,b}\neq g_{i,b}\) for finite \(\epsilon\), and exact equality is recovered as \(\epsilon\to 0\).
We use \(\epsilon = 10^{-12}\) in all experiments.
Because the padded entries are zero, padding does not change the $L^2$ magnitude of the final partial block.
After forming the direction tensor, we discard the padded columns and reshape the remaining entries back to the original gradient shape.
During recomposition, the direction update is temporarily padded, combined block-wise with the magnitude update, and cropped back to the original $n$ columns.

\subsection{Module-wise Interpretation Across Transformer Layers}
\label{sec:appendix_modules}

The same block rule is used on all target linear layers:
\[
\mathcal{W}_{\text{target}}=
\{W_q,W_k,W_v,W_o,W_{\text{up}},W_{\text{gate}},W_{\text{down}}\}.
\]

For each layer \(\ell\), with \(G^{(\ell)}=\nabla W^{(\ell)}\), we preserve the row axis and block along columns:
\[
g_{i,b}^{(\ell)}=G^{(\ell)}_{i,\;bB:(b+1)B}.
\]

Although the blocking rule is the same, its interpretation depends on the module.
In decoder-only transformers, different column-side channel groups can play different roles.
Because of this, the same coarse normalization range may not be equally suitable for all modules.

Under this view,
(i) for \(q/k/v\), blocking acts along the representation axis within each row;
(ii) for \(W_o\in\mathbb{R}^{d_{\text{hidden}}\times(Hd_{\text{head}})}\), blocking acts over concatenated head-channel columns.

This module-wise view helps explain why coarse normalization can be especially harmful in transformer blocks with heterogeneous column-side structure.

\subsection{Detailed Analysis: Magnitude-Factor Preservation and the Block Size Trade-off}
\label{appx:theory}

This appendix provides the formal magnitude-factor preservation result referenced in \S\ref{sec:theoretical_foundations} and the detailed small-$B$/large-$B$ derivation underlying the block-size trade-off discussed there.

\subsubsection{Setup}
\label{appx:theory_setup}

For notational simplicity, the derivation below assumes $B\mid n$; the same definitions apply to the shorter final block when $B\nmid n$.
As in \S\ref{sec:block_decomposition}, we partition the column index set $\{0, \ldots, n-1\}$ into $K = n / B$ contiguous blocks of size $B$, indexed by $b \in \{0, \ldots, K-1\}$.
For each row $i$ and block $b$, let $G_{i,b}\in\mathbb{R}^{B}$ denote the corresponding sub-vector of $G$.
For the exact identities used in the analysis below, we consider the $\epsilon\rightarrow0$ limit of the implementation described in Appendix~\ref{sec:appendix_flatten}.
For each nonzero block, we define
\[
m_{i,b}=\|G_{i,b}\|_2,
\qquad
v_{i,b}=\frac{G_{i,b}}{m_{i,b}},
\]
so that
\[
G_{i,b}=m_{i,b}v_{i,b},
\qquad
\|v_{i,b}\|_2=1
\]
exactly.
For a zero block, we define $v_{i,b}=0$.
The implementation uses $\epsilon=10^{-12}$ in the normalization denominator only for numerical stability.
MoARa applies low-rank projection only to the direction component $V \in \mathbb{R}^{m \times n}$ (where each row block $V_{i,b} = v_{i,b}$) and treats the magnitude component $M \in \mathbb{R}^{m \times K}$ (where $M_{i,b} = m_{i,b}$) as a separate optimizer state. The projected direction is $\tilde{V} = P_V P_V^\top V$, where $P_V \in \mathbb{R}^{m \times r}$ is obtained from the SVD of $V$. The reconstructed gradient is $\tilde{G}_{i,b} = m_{i,b} \cdot \tilde{v}_{i,b}$, where $\tilde{v}_{i,b} \in \mathbb{R}^B$ denotes the corresponding row block of $\tilde{V}$. 

\subsubsection{Magnitude-Factor Preservation}
\label{appx:theory_magnitude}

Recall that pure low-rank projection acts on the full gradient matrix as $G_{\mathrm{proj}}=PP^\top G$.
We formalize a property specific to MoARa's decomposition: the pre-projection block-magnitude factor is carried outside the direction projection, so it does not contribute to block-wise cosine misalignment.

\begin{proposition}
\label{prop:magnitude_preservation}
For the alignment diagnostic,
let
\[
\widetilde{V}=P_VP_V^\top V,
\]
and let $\widetilde{v}_{i,b}$ denote the row block of $\widetilde{V}$
corresponding to row $i$ and block $b$.
MoARa reconstructs each gradient block as
\[
\widetilde{G}_{i,b}
=
m_{i,b}\widetilde{v}_{i,b}.
\]
Thus, the pre-projection magnitude factor $m_{i,b}$ is carried outside the direction projection and reintroduced unchanged.
Whenever $G_{i,b}$ and $\widetilde{G}_{i,b}$ are both nonzero,
\[
\cos(G_{i,b},\widetilde{G}_{i,b})
=
\cos(v_{i,b},\widetilde{v}_{i,b}).
\]
\end{proposition}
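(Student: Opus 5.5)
The first claim holds by construction. In the $\epsilon\to0$ setting of Appendix~\ref{appx:theory_setup}, MoARa forms $V$ blockwise from the pre-projection gradient. It projects only $V$, obtaining $\widetilde V=P_VP_V^\top V$. The reconstruction $\widetilde G_{i,b}=m_{i,b}\widetilde v_{i,b}$ multiplies each projected direction block by the scalar $m_{i,b}=\|G_{i,b}\|_2$, which is computed before projection and never passes through $P_V$. So I would simply unpack the definitions: $m_{i,b}$ depends only on $G$, not on $P_V$, and it appears as a scalar prefactor in the reconstruction.

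For the cosine identity, I would first record the positive homogeneity of cosine. For any nonzero $x,y\in\mathbb{R}^B$ and scalars $\alpha,\beta>0$,
\[
\cos(\alpha x,\beta y)=\frac{\alpha\beta\,\langle x,y\rangle}{\alpha\|x\|_2\,\beta\|y\|_2}=\cos(x,y).
\]

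Next I would use the hypotheses to make the scalars positive and the block vectors nonzero.
\begin{itemize}
\item Since $G_{i,b}\neq0$, we have $m_{i,b}=\|G_{i,b}\|_2>0$. The direction is $v_{i,b}=G_{i,b}/m_{i,b}$, so $G_{i,b}=m_{i,b}v_{i,b}$ with $\|v_{i,b}\|_2=1$.
\item Since $\widetilde G_{i,b}=m_{i,b}\widetilde v_{i,b}\neq0$ and $m_{i,b}>0$, it follows that $\widetilde v_{i,b}\neq0$.
\end{itemize}
Applying the homogeneity identity with $\alpha=\beta=m_{i,b}$, $x=v_{i,b}$ and $y=\widetilde v_{i,b}$ then gives $\cos(G_{i,b},\widetilde G_{i,b})=\cos(v_{i,b},\widetilde v_{i,b})$.

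The only point needing care is ensuring that both cosines are well defined. This is the role of the nonzero hypothesis together with the zero-block convention $v_{i,b}=0$. It is also the reason the identity is stated in the exact $\epsilon\to0$ limit. For finite $\epsilon$, the recomposed block carries the factor $\|g_{i,b}\|_2/(\|g_{i,b}\|_2+\epsilon)>0$, which is again a positive scalar, so the same argument goes through unchanged. There is no real obstacle beyond these bookkeeping remarks.
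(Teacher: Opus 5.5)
Your proof is correct and follows the paper's own argument. The paper writes $G_{i,b}=m_{i,b}v_{i,b}$ and $\widetilde G_{i,b}=m_{i,b}\widetilde v_{i,b}$ and cancels $m_{i,b}^2$ from numerator and denominator, which is your homogeneity identity with $\alpha=\beta=m_{i,b}$, and the nonzero hypotheses secure $m_{i,b}>0$ and $\widetilde v_{i,b}\neq 0$ in both. One small slip in your finite-$\epsilon$ aside: for the true gradient block the right scalars are $\alpha=\|g_{i,b}\|_2+\epsilon$ and $\beta=m_{i,b}$, not the factor $\|g_{i,b}\|_2/(\|g_{i,b}\|_2+\epsilon)$, but your lemma already allows $\alpha\neq\beta$, so the conclusion stands.
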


\begin{proof}
For a nonzero block,
$G_{i,b}=m_{i,b}v_{i,b}$ and
$\widetilde{G}_{i,b}=m_{i,b}\widetilde{v}_{i,b}$.
Therefore,
\[
\begin{aligned}
\cos(G_{i,b},\widetilde{G}_{i,b})
&= \frac{
m_{i,b}^{2}
\langle v_{i,b},\widetilde{v}_{i,b}\rangle
}{
m_{i,b}^{2}
\|v_{i,b}\|_2
\|\widetilde{v}_{i,b}\|_2
} \\
&= \cos(v_{i,b},\widetilde{v}_{i,b}).
\end{aligned}
\]
Hence, the scalar magnitude factor does not contribute to the block-wise cosine misalignment; any such misalignment is determined by the projected direction branch.
\end{proof}

\paragraph{Connection to weight normalization.}
Proposition~\ref{prop:magnitude_preservation} is the gradient-side analogue of the conditioning argument made by Salimans and Kingma~\citep{salimans2016weightnorm} for weight normalization, who showed that reparameterizing a parameter $w$ as $w = g \cdot v / \|v\|$ improves the conditioning of stochastic gradient descent by decoupling scale and direction. DoRA~\citep{liu2024dora} extends this principle to parameter-efficient fine-tuning by decomposing pretrained weights into magnitude and direction. MoARa adapts this principle to the gradient side in the context of low-rank pretraining: the motivation is not optimization conditioning in itself, but separating block-scale information from projection-induced cosine distortion, complementing the rank allocation strategy of \S\ref{sec:rank_allocation}.

\subsubsection{Block Size Trade-off: Detailed Derivation}
\label{appx:theory_block_size}

\paragraph{Small-$B$ regime (statistical noise).}
When $B$ is small, each magnitude $m_{i,b}$ is computed over only a few coordinates. 
In the extreme $B = 1$, we have $m_{i,j} = |G_{i,j}|$ and $v_{i,j} = \operatorname{sign}(G_{i,j})$, with $v_{i,j}=0$ when $G_{i,j}=0$.
Thus, $V\in\{-1,0,+1\}^{m\times n}$, and the relative magnitude structure of $G$ is absorbed into $M$. 
The directional branch $V$ then operates on a sign-valued matrix that no longer contains the original coordinate-wise magnitude variation.
This extreme normalization can reduce the information available to the directional branch.
More generally, for small $B$, $m_{i,b}$ inherits the noise of small-sample $L^2$ norm estimates, becoming progressively less reliable as a summary of the block's energy. 
Under approximate within-row stationarity and weak dependence, let
\[
X_{i,j}=G_{i,j}^{2},
\qquad
\mu_2=\mathbb{E}[X_{i,j}]>0,
\]
and assume $\operatorname{Var}(X_{i,j})<\infty$.
For the block energy
\[
S_{i,b}
=
\|G_{i,b}\|_2^2
=
\sum_{j\in b}X_{i,j},
\]
we then have
\[
\mathbb{E}[S_{i,b}]
\approx
B\mu_2,
\qquad
\operatorname{Var}(S_{i,b})
=
O(B).
\]
Applying a first-order delta-method approximation to
$m_{i,b}=\sqrt{S_{i,b}}$ gives
\[
\mathbb{E}[m_{i,b}]
=
\Theta(\sqrt{B}),
\qquad
\operatorname{Std}[m_{i,b}]
=
O(1),
\]
and therefore
\[
\frac{\operatorname{Std}[m_{i,b}]}
{\mathbb{E}[m_{i,b}]}
=
O(B^{-1/2}).
\]
Thus, the relative variability of the block-magnitude estimate decreases as $B$ grows. 

\paragraph{Optimizer-dynamics view at $B = 1$.}
At $B=1$, all coordinate-wise magnitude variation is carried by $M$, while the direction optimizer receives only a projection of the sign-valued direction signal.
This extreme separation between the two branches provides one possible explanation for the degraded $B = 1$ result in Figure~\ref{fig:block_size_sweep}.

\paragraph{Large-$B$ regime (column-axis subspace coupling).}
\label{appx:column_axis_coupling}
When $B$ is large, $m_{i,b}$ aggregates over heterogeneous coordinates that may span multiple attention heads. The unit direction $v_{i,b} = G_{i,b} / m_{i,b}$ then exhibits an attenuation phenomenon we call \emph{column-axis subspace coupling}: coordinates with relatively small magnitude within $G_{i,b}$ are scaled down by a factor proportional to the dominant coordinates' magnitude. Formally, partitioning $G_{i,b}$ into a high-magnitude subset $\mathcal{A}$ and a low-magnitude subset $\mathcal{C}$ (with $|G_{i,a}| \gg |G_{i,c}|$ coordinate-wise for $a \in \mathcal{A}$, $c \in \mathcal{C}$), the corresponding directional sub-norm satisfies
\begin{equation*}
\begin{aligned}
\|v_{i,\mathcal{C}}\|_2
&=
\frac{\|G_{i,\mathcal{C}}\|_2}
{\sqrt{\|G_{i,\mathcal{A}}\|_2^2 + \|G_{i,\mathcal{C}}\|_2^2}} \\
&\leq
\frac{\|G_{i,\mathcal{C}}\|_2}
{\|G_{i,\mathcal{A}}\|_2},
\end{aligned}
\end{equation*}
which can become arbitrarily small as the magnitude ratio grows. 
In the limit $B = n$ (rowwise normalization), each $v_{i,1}$ spans an entire row and the dominant coordinate of $G_{i,:}$ suppresses information from all subdominant directions, explaining why an overly coarse decomposition suppresses weaker local signals. 
This effect is amplified when a block contains a few unusually large coordinates, because the shared magnitude scale is then dominated by those entries.

\paragraph{Balanced regime.}
The two effects intersect in an intermediate regime: small-$B$ noise diminishes as $B$ increases, while large-$B$ coupling grows with $B$.
For Transformer gradients, the attention-head dimension provides a natural architecture-informed reference scale for choosing the block granularity.
This suggests selecting $B$ in the neighborhood of $d_{\mathrm{head}}$, without requiring exact head alignment.
Appendix~\ref{appx:block_size_sensitivity} reports empirical evidence consistent with this choice, observing a plateau of near-optimal performance across $B\in[d_{\mathrm{head}}/2,\,2d_{\mathrm{head}}]$ on Llama~2 350M ($d_{\mathrm{head}}=64$).

\begin{figure}[t]
    \centering
    \includegraphics[width=\linewidth]{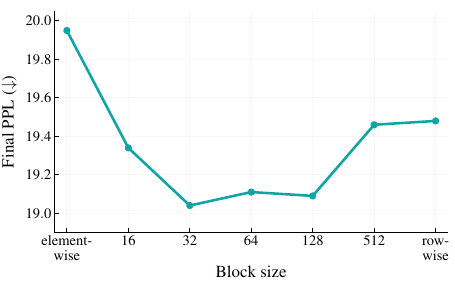}
    \caption{\textbf{Block size sensitivity sweep on Llama 2 350M 60k steps.}
    Final validation PPL as a function of $B \in \{1 \ (\text{element}), 16, 32, 64, 128, 512, 1024 \ (\text{row}) \}$. The U-shape is consistent with the statistical--coupling trade-off analyzed in Appendix~\ref{appx:theory_block_size}. 
    The plateau region $B \in \{32, 64, 128\}$ covers the head dimension of the Llama 2 scales we evaluate.}
    \label{fig:block_size_sweep}
\end{figure}

\subsection{Block Size Sensitivity Sweep}
\label{appx:block_size_sensitivity}

We characterize the sensitivity of block-wise decomposition to the block size parameter $B$ by sweeping $B \in \{1, 16, 32, 64, 128, 512, 1024\}$ on Llama 2~\citep{touvron2023llama2openfoundation} 350M while keeping all other settings identical to the full MoARa configuration. Figure~\ref{fig:block_size_sweep} reports final validation perplexity at 60k steps as a function of $B$. The curve is U-shaped: both endpoints --- elementwise ($B = 1$, PPL 19.95) and rowwise ($B = 1024$, PPL 19.48) --- give clearly higher perplexity than the interior, while a region of small variation spans $B = 32$ through $B = 128$ (PPL 19.04, 19.11, 19.09 respectively; all within 0.07 PPL of the best).

The U-shaped trend is consistent with the statistical--coupling trade-off analyzed in Appendix~\ref{appx:theory_block_size}.
At $B=1$, the direction signal becomes sign-valued and loses coordinate-wise magnitude variation, which provides
a possible qualitative explanation for the degraded trajectory.
At the largest tested block size, the coarse block-wise magnitude statistics are less able to track heterogeneous
local scale structure.
The head dimension of Llama 2 350M sits inside the plateau region $B \in \{32, 64, 128\}$, providing the architecture-aware default used throughout our experiments.

% ============================================================================
% ============================================================================
% ⬆️⬆️⬆️⬆️ APPENDIX B ⬆️⬆️⬆️⬆️
% ============================================================================
% ============================================================================

% ============================================================================
% ============================================================================
% ⬇️⬇️⬇️⬇️ APPENDIX C ⬇️⬇️⬇️⬇️
% ============================================================================
% ============================================================================

\section{Module-aware Rank Allocation: Procedure and Analysis}
\label{app_sec:rank_allocation}

This section provides the algorithmic procedure and supporting evidence for the module-aware projection-rank allocation introduced in \S\ref{sec:rank_allocation}.

\subsection{Profiling-Based Rank Allocation}
\label{appx:profiling}

Our rank reallocation is based on a profiling step conducted before the target MoARa training run. We first run the standard GaLore~\citep{zhao2024galore} configuration for $N_p$ steps with uniform rank $r_{\mathrm{base}}$ across all modules, logging the full gradient $G_t(W)$ and the reconstructed gradient $G_{\mathrm{recon},t}(W) = \mathrm{Rep}_B(M) \odot (P_\tau P_\tau^\top V)$ at sampled time points $t \in \mathcal{S}_p$. The module-wise alignment score $S_\tau$ is then computed as in Eq.~\eqref{eq:module_alignment}, and the resulting integer rank allocation is fixed for the entire training run, consistent with our static-allocation design.

% ============================================================================
% ============================================================================
% ⬇️⬇️⬇️⬇️ TABLE: PROFILING_WINDOW_SENSITIVITY ⬇️⬇️⬇️⬇️
% ============================================================================
% ============================================================================

\begin{table}[t]
    \centering
    \caption{\textbf{Short profiling runs match the full-trajectory assignment.}
    Module-wise ranks of $S_\tau$ are identical across 1k, 10k, and reference
    60k profiling runs on Llama~2 350M. Rank 1 denotes the highest cosine
    alignment and 7 the lowest. All module-wise ranks are identical across the three profiling settings.}
    \label{tab:profiling_length}
    \small
    \setlength{\tabcolsep}{3.8pt}
    \renewcommand{\arraystretch}{1.12}
    \begin{tabular*}{\columnwidth}{@{\extracolsep{\fill}}lcccc@{}}
    \toprule
    \textbf{Module}
    & \makecell{\textbf{1k}\\{\footnotesize $\sim$11 min}}
    & \makecell{\textbf{10k}\\{\footnotesize $\sim$72 min}}
    & \makecell{\textbf{60k (full)}\\{\footnotesize at 6k}}\\
    \midrule
    \attn{q}   & 5 (.899) & 5 (.892) & 5 (.892) \\
    \attn{k}   & 6 (.885) & 6 (.847) & 6 (.851) \\   
    \attn{v}   & 1 (.959) & 1 (.944) & 1 (.971) \\ 
    \attn{o}   & 2 (.932) & 2 (.934) & 2 (.960) \\  
    \mlp{up}   & 3 (.922) & 3 (.900) & 3 (.905) \\
    \mlp{gate} & 4 (.921) & 4 (.898) & 4 (.900) \\
    \mlp{down} & 7 (.819) & 7 (.721) & 7 (.734) \\
    \bottomrule
    \end{tabular*}
\end{table}

% ============================================================================
% ============================================================================
% ⬆️⬆️⬆️⬆️ TABLE: PROFILING_WINDOW_SENSITIVITY ⬆️⬆️⬆️⬆️
% ============================================================================
% ============================================================================

\subsection{Profiling protocol and intervals}

Three distinct intervals appear in our procedure and serve different purposes. The \emph{profiling stride} $x$ controls how often $S(W, t)$ is recorded during the baseline profiling run. The \emph{profiling window} $N_p$ is the total length of the baseline profiling run. The \emph{SVD update interval} $T_{\mathrm{SVD}}$ is how often the projection matrix $P$ is recomputed via TruncatedSVD during MoARa training. The profiling stride and window control the diagnostic only; only the SVD update interval is part of the MoARa training loop itself.

\subsection{Profiling cost and short-run sufficiency}
\label{appx:profiling_cost}

\paragraph{Profiling cost.}
The profiling phase reuses the standard GaLore training setup and adds negligible per-step overhead. 
The sampling interval $x$ in $\mathcal{S}_p = \{x, 2x, \ldots, N_p\}$ controls only the temporal resolution of the logged diagnostic and does not affect the resulting rank allocation. 
In our profiling runs we use $x = 50$; this choice is made for monitoring efficiency rather than for the allocation procedure itself.

\paragraph{The profiling phase is off the wall-clock-to-target path.}
The profiling run is conducted once before target training.
All raw wall-clock trajectories and training-only time-to-target measurements exclude this setup phase.
In \S\ref{sec:main_result}, we additionally report non-amortized and amortized time-to-target by accounting for the profiling cost separately.
The resulting donor/receiver assignment can be reused across subsequent target runs.

\paragraph{A short profiling run already suffices.}
We further find that this one-time cost itself can be made very small. On Llama~2~\citep{touvron2023llama2openfoundation} 350M, a 1k-step profiling run completes in approximately 11 minutes and a 10k-step run completes in approximately 72 minutes in our hardware settings. The budget-donor/receiver assignment obtained from either short run matches the assignment obtained from the full 60k-step profiling trajectory at its 6k checkpoint (Table~\ref{tab:profiling_length}). 
This indicates that, across short and long profiles, the donor/receiver assignment is established early in the training trajectory.
The observation is consistent with our claim in \S\ref{sec:analysis} that rank allocation operates on an architectural property of the Transformer rather than on optimizer-specific dynamics.

\paragraph{Amortization across scales.}
A 1k-step profiling run on Llama~2 350M corresponds to roughly 2.6\% of a single 350M target run (7.13~h, see Table~\ref{tab:cross_method}). 
The same donor/receiver assignment is reused at the 1B and 7B scales, while the numerical ranks follow each scale's $r_{\mathrm{base}}$.
The relative profiling overhead therefore drops further at those scales.
In practice, the profiling cost is small in absolute terms at 350M and effectively amortized when applied to larger target runs.
Accordingly, we distinguish the target-training time $T_{train}$ from the non-amortized time $T_{train}$ + $T_{profile}$. When the same assignment is reused across $K$ target runs, the corresponding amortized time-to-target is $T_{train}$ + $T_{profile}$ / $K$; \S\ref{sec:main_result} reports this accounting alongside the headline results.
% ============================================================================
% ============================================================================
% ⬇️⬇️⬇️⬇️ FIGURE: PROFILING_STRIDE_SENSITIVITY ⬇️⬇️⬇️⬇️
% ============================================================================
% ============================================================================

\begin{figure*}[t]
    \centering
    \includegraphics[width=\linewidth]{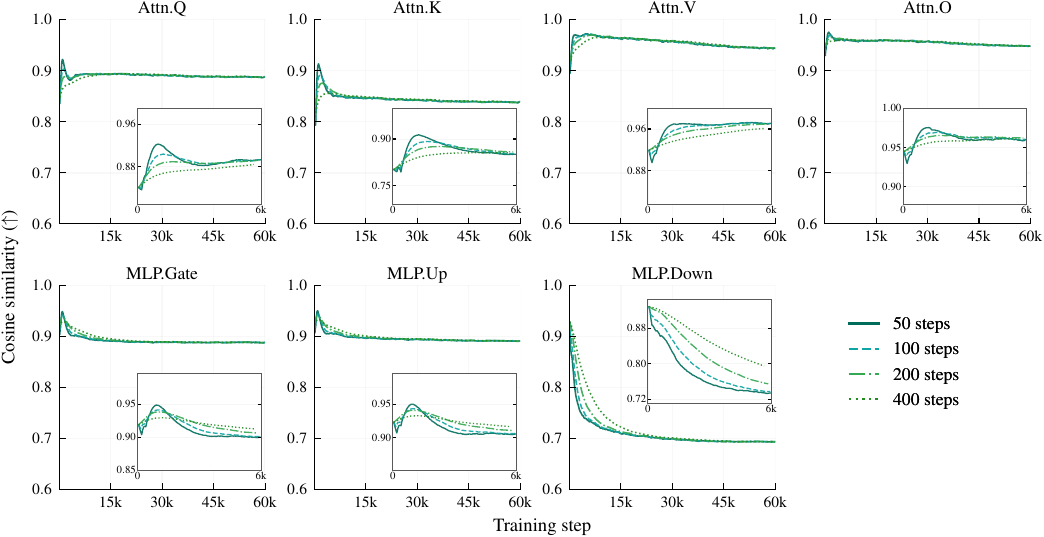}
    \caption{\textbf{Module-wise cosine similarity under varying profiling strides (Llama 2 350M, 60k-step standard GaLore).} Each panel plots the EMA of cosine similarity between full and reconstructed gradients for one projection module, sampled at strides of $50$, $100$, $200$, and $400$ steps. The inset zooms into the warmup region ($0$--$6$k). The donor/receiver assignment is insensitive to the choice of profiling stride.}
    \label{app_fig:profiling_stride_sensitivity}
\end{figure*}

% ============================================================================
% ============================================================================
% ⬆️⬆️⬆️⬆️ FIGURE: PROFILING_STRIDE_SENSITIVITY ⬆️⬆️⬆️⬆️
% ============================================================================
% ============================================================================

% ============================================================================
% ============================================================================
% ⬇️⬇️⬇️⬇️ TABLE: PROFILING_STRIDE_SENSITIVITY ⬇️⬇️⬇️⬇️
% ============================================================================
% ============================================================================

\begin{table*}[t]
    \centering
    \caption{\textbf{Module-wise cosine similarity rankings under different profiling intervals.}
    Values are EMA-smoothed cosine similarities. Modules are ordered from highest
    to lowest cosine similarity.}
    \label{tab:cosine_stride_ranking}
    \small
    \setlength{\tabcolsep}{3.8pt}
    \renewcommand{\arraystretch}{1.06}
    
    \begin{tabular*}{\textwidth}{@{\extracolsep{\fill}}cc*{7}{c}@{}}
    \toprule
    \textbf{Step} & \textbf{Interval}
    & \multicolumn{7}{c}{\textbf{Ranking by cosine similarity}} \\
    \cmidrule(l){3-9}
    & & \textbf{1st} & \textbf{2nd} & \textbf{3rd} & \textbf{4th}
      & \textbf{5th} & \textbf{6th} & \textbf{7th} \\
    \midrule
    
    \multirow{4}{*}{6k}
    & 50  & \makecell{V\\(0.9709)} & \makecell{O\\(0.9600)}
          & \makecell{Up\\(0.9052)} & \makecell{Gate\\(0.8997)}
          & \makecell{Q\\(0.8924)} & \makecell{K\\(0.8505)}
          & \makecell{Down\\(0.7339)} \\
    & 100 & \makecell{V\\(0.9713)} & \makecell{O\\(0.9610)}
          & \makecell{Up\\(0.9061)} & \makecell{Gate\\(0.9010)}
          & \makecell{Q\\(0.8924)} & \makecell{K\\(0.8530)}
          & \makecell{Down\\(0.7371)} \\
    & 200 & \makecell{V\\(0.9699)} & \makecell{O\\(0.9624)}
          & \makecell{Up\\(0.9103)} & \makecell{Gate\\(0.9059)}
          & \makecell{Q\\(0.8914)} & \makecell{K\\(0.8578)}
          & \makecell{Down\\(0.7533)} \\
    & 400 & \makecell{V\\(0.9623)} & \makecell{O\\(0.9614)}
          & \makecell{Up\\(0.9158)} & \makecell{Gate\\(0.9117)}
          & \makecell{Q\\(0.8844)} & \makecell{K\\(0.8550)}
          & \makecell{Down\\(0.7897)} \\
    
    \midrule
    
    \multirow{4}{*}{15k}
    & 50  & \makecell{V\\(0.9617)} & \makecell{O\\(0.9578)}
          & \makecell{Up\\(0.8959)} & \makecell{Q\\(0.8926)}
          & \makecell{Gate\\(0.8905)} & \makecell{K\\(0.8455)}
          & \makecell{Down\\(0.7091)} \\
    & 100 & \makecell{V\\(0.9627)} & \makecell{O\\(0.9583)}
          & \makecell{Up\\(0.8968)} & \makecell{Q\\(0.8928)}
          & \makecell{Gate\\(0.8912)} & \makecell{K\\(0.8459)}
          & \makecell{Down\\(0.7103)} \\
    & 200 & \makecell{V\\(0.9643)} & \makecell{O\\(0.9594)}
          & \makecell{Up\\(0.8986)} & \makecell{Q\\(0.8939)}
          & \makecell{Gate\\(0.8929)} & \makecell{K\\(0.8475)}
          & \makecell{Down\\(0.7131)} \\
    & 400 & \makecell{V\\(0.9643)} & \makecell{O\\(0.9591)}
          & \makecell{Up\\(0.9008)} & \makecell{Gate\\(0.8953)}
          & \makecell{Q\\(0.8935)} & \makecell{K\\(0.8489)}
          & \makecell{Down\\(0.7219)} \\
    
    \midrule
    
    \multirow{4}{*}{30k}
    & 50  & \makecell{V\\(0.9579)} & \makecell{O\\(0.9574)}
          & \makecell{Up\\(0.8942)} & \makecell{Q\\(0.8915)}
          & \makecell{Gate\\(0.8884)} & \makecell{K\\(0.8424)}
          & \makecell{Down\\(0.6992)} \\
    & 100 & \makecell{V\\(0.9590)} & \makecell{O\\(0.9581)}
          & \makecell{Up\\(0.8947)} & \makecell{Q\\(0.8918)}
          & \makecell{Gate\\(0.8890)} & \makecell{K\\(0.8428)}
          & \makecell{Down\\(0.6997)} \\
    & 200 & \makecell{V\\(0.9590)} & \makecell{O\\(0.9581)}
          & \makecell{Up\\(0.8949)} & \makecell{Q\\(0.8920)}
          & \makecell{Gate\\(0.8893)} & \makecell{K\\(0.8431)}
          & \makecell{Down\\(0.7002)} \\
    & 400 & \makecell{V\\(0.9592)} & \makecell{O\\(0.9579)}
          & \makecell{Up\\(0.8952)} & \makecell{Q\\(0.8926)}
          & \makecell{Gate\\(0.8895)} & \makecell{K\\(0.8440)}
          & \makecell{Down\\(0.7016)} \\
    
    \midrule
    
    \multirow{4}{*}{60k}
    & 50  & \makecell{O\\(0.9484)} & \makecell{V\\(0.9442)}
          & \makecell{Up\\(0.8909)} & \makecell{Q\\(0.8884)}
          & \makecell{Gate\\(0.8881)} & \makecell{K\\(0.8395)}
          & \makecell{Down\\(0.6936)} \\
    & 100 & \makecell{O\\(0.9487)} & \makecell{V\\(0.9444)}
          & \makecell{Up\\(0.8908)} & \makecell{Q\\(0.8880)}
          & \makecell{Gate\\(0.8879)} & \makecell{K\\(0.8392)}
          & \makecell{Down\\(0.6937)} \\
    & 200 & \makecell{O\\(0.9488)} & \makecell{V\\(0.9447)}
          & \makecell{Up\\(0.8908)} & \makecell{Q\\(0.8879)}
          & \makecell{Gate\\(0.8879)} & \makecell{K\\(0.8387)}
          & \makecell{Down\\(0.6936)} \\
    & 400 & \makecell{O\\(0.9484)} & \makecell{V\\(0.9439)}
          & \makecell{Up\\(0.8904)} & \makecell{Gate\\(0.8876)}
          & \makecell{Q\\(0.8874)} & \makecell{K\\(0.8380)}
          & \makecell{Down\\(0.6936)} \\
    
    \bottomrule
    \end{tabular*}
\end{table*}

% ============================================================================
% ============================================================================
% ⬆️⬆️⬆️⬆️ TABLE: PROFILING_STRIDE_SENSITIVITY ⬆️⬆️⬆️⬆️
% ============================================================================
% ============================================================================

\subsection{Profiling-stride and profiling-window sensitivity}

We test the dependence of the budget-donor/receiver assignment on both the profiling stride and the profiling window. We re-sample the recorded baseline GaLore trajectory at strides of $100$, $200$, and $400$ steps in addition to the default $50$ on Llama 2~\citep{touvron2023llama2openfoundation} 350M. We then examine the resulting EMA-smoothed cosine similarities at four checkpoints ($6$k, $15$k, $30$k, $60$k). Figure~\ref{app_fig:profiling_stride_sensitivity} reports the per-module trajectories. Table~\ref{tab:cosine_stride_ranking} reports the corresponding numerical rankings.

\paragraph{Donor and receiver positions are stable.}
$\mlp{down}$ is the lowest-scoring module in all sixteen (window, stride) combinations. In every case, it occupies rank 7 by a substantial margin. $\attn{k}$ occupies rank 6 in every case. $\attn{q}$ occupies rank 4 or 5 in every case. The gap $S_K - S_{\mathrm{Down}}$ stays in the range $[0.10, 0.15]$ across all combinations. $\mlp{down}$ is therefore clearly separated as the unique receiver.

\paragraph{Adjacent swaps occur only within the upper tier and stay tightly bounded.}
Some swaps appear in Table~\ref{tab:cosine_stride_ranking} among modules whose values lie within $0.01$ of one another. These all occur strictly within the upper-tier cluster ($\{\attn{v}, \attn{o}, \mlp{up}, \mlp{gate}, \attn{q}\}$). The largest observed within-tier difference is $0.011$ (between $\attn{v}$ and $\attn{o}$ at $6$k, stride $50$) and the typical difference is below $0.008$. These are about two orders of magnitude smaller than the donor--receiver boundary $S_K - S_{\mathrm{Down}} \in [0.10, 0.15]$. The upper-tier ordering is therefore sensitive to small perturbations in stride or window length, but the donor/receiver assignment used by Algorithm~\ref{alg:moara_full} is unaffected.

\paragraph{Alignment ordering is established within warmup.}
The inset in Figure~\ref{app_fig:profiling_stride_sensitivity} zooms into the warmup region ($0$--$6$k). The donor/receiver ordering ($\attn{q}$, $\attn{k}$ as donors; $\mlp{down}$ as receiver) is already present within this region and is preserved across the remainder of the $60$k-step run. This is consistent with the short-run sufficiency observed in Appendix~\ref{appx:profiling_cost} (a 1k-step profiling run, which lies entirely inside the warmup region, already recovers the full assignment). 
The pattern remains unchanged across the evaluated profiling strides of 50, 100, 200, and 400 steps.

\subsection{Final rank allocation}
\label{appx:final_rank_allocation}
The integer rank allocation $\{r_\tau\}_{\tau \in \mathcal{T}}$ used in our main experiments follows directly from our default configuration: $\mathcal{D} = \{\attn{q}, \attn{k}\}$, $\mathcal{R} = \{\mlp{down}\}$, $\Delta r = r_{\mathrm{base}}/2$. Per-module ranks then reduce to $r_Q = r_K = r_{\mathrm{base}}/2$ (donors), $r_V = r_O = r_{\mathrm{up}} = r_{\mathrm{gate}} = r_{\mathrm{base}}$ (uniform baseline), and $r_{\mathrm{down}} = 2\,r_{\mathrm{base}}$ (receiver). With $|\mathcal{D}| = 2$ and $|\mathcal{R}| = 1$, the total budget $\sum_\tau r_\tau = 7\,r_{\mathrm{base}}$ is preserved by construction, with no residual adjustment required for this choice of $\Delta r$, $|\mathcal{D}|$, and $|\mathcal{R}|$. Per-scale baseline ranks $r_{\mathrm{base}}$ follow the GaLore~\citep{zhao2024galore} convention at each model size and are reported in Appendix~\ref{appx:training_params}.

% ============================================================================
% ============================================================================
% ⬇️⬇️⬇️⬇️ FIGURE: RANK_SENSITIVITY ⬇️⬇️⬇️⬇️
% ============================================================================
% ============================================================================

\begin{figure}
    \centering
    \includegraphics[width=1\linewidth]{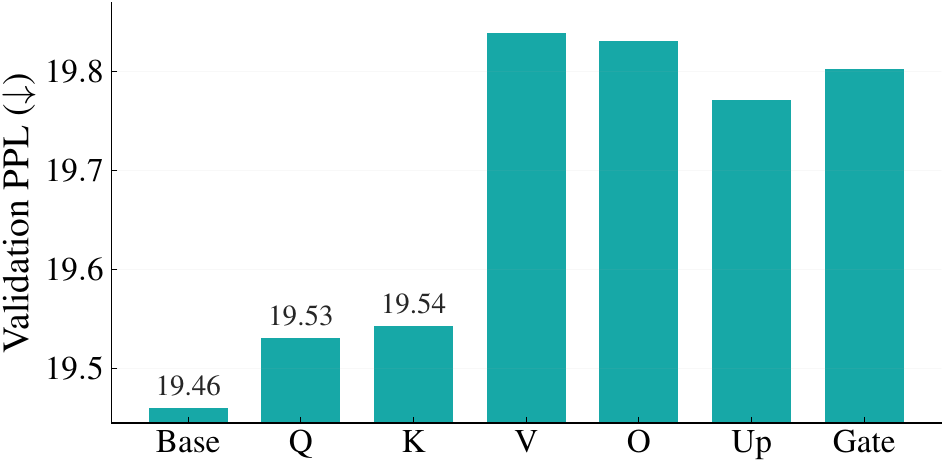}
    \caption{\textbf{Module-wise projection-rank sensitivity on Llama 2 350M.} We reduce the rank of one target module to \(r=128\) and keep the others at \(r=256\). \texttt{attn.q/k} are the most robust to rank reduction.}
    \label{app_fig:rank_sensitivity}
\end{figure}

% ============================================================================
% ============================================================================
% ⬆️⬆️⬆️⬆️ FIGURE: RANK_SENSITIVITY ⬆️⬆️⬆️⬆️
% ============================================================================
% ============================================================================

% ============================================================================
% ============================================================================
% ⬇️⬇️⬇️⬇️ TABLE: MODEL_HYPERPARAMETERS ⬇️⬇️⬇️⬇️
% ============================================================================
% ============================================================================

\begin{table*}[t]
    \centering
    \caption{\textbf{Model architectures and pre-training hyperparameters.}
    We report architecture details and the pre-training budget for each evaluated model.}
    \label{app_tab:model_hparams}
    \normalsize
    \renewcommand{\arraystretch}{1.12}
    \setlength{\tabcolsep}{4.8pt}
    \begin{tabular*}{\textwidth}{@{\extracolsep{\fill}}lcrrrrrrr@{}}
    \toprule
    \textbf{Model} & \textbf{Params}
    & \multicolumn{5}{c}{\textbf{Architecture}}
    & \multicolumn{2}{c}{\textbf{Pre-training}} \\
    \cmidrule(lr){3-7}
    \cmidrule(l){8-9}
    & &
    \textbf{Hidden} &
    \textbf{Interm.} &
    \textbf{Heads} &
    \textbf{Layers} &
    \textbf{KV} &
    \textbf{Steps} &
    \textbf{Tokens} \\
    \midrule
    \multicolumn{9}{@{}l}{\textit{Scaling study}} \\
    Llama 2     & 350M & 1024 &  2736 & 16 & 24 & 16 &  60K &  7.8B \\
    Llama 2     & 1B   & 2048 &  5461 & 32 & 24 & 16 & 100K & 13.1B \\
    Llama 2     & 7B   & 4096 & 11008 & 32 & 32 & 16 & 150K & 19.7B \\
    \addlinespace[2pt]
    \midrule
    \multicolumn{9}{@{}l}{\textit{Model-family comparison}} \\
    Llama 3.2   & 300M & 1024 &  2736 & 16 & 24 &  4 &  60K &  7.8B \\
    Qwen2.5     & 350M & 1024 &  2816 & 16 & 24 &  4 &  60K &  7.8B \\
    Qwen3       & 350M & 1024 &  2816 & 16 & 24 &  4 &  60K &  7.8B \\
    DeepSeek-V2 & 350M & 1024 &  2736 & 16 & 24 & 16 &  60K &  7.8B \\
    \bottomrule
    \end{tabular*}
\end{table*}

% ============================================================================
% ============================================================================
% ⬆️⬆️⬆️⬆️ TABLE: MODEL_HYPERPARAMETERS ⬆️⬆️⬆️⬆️
% ============================================================================
% ============================================================================

\subsection{Donor Suitability and Module-wise Sensitivity}
\label{appx:donor_sensitivity}

Projection sensitivity and donor suitability are related but not identical. A module with high alignment robustness can be a donor candidate, but donor selection also depends on whether reducing its rank is less harmful than reducing the rank of other modules. To identify safe donors, we conduct a controlled module-wise projection-rank sensitivity ablation on Llama~2~\citep{touvron2023llama2openfoundation} 350M: we reduce one target module's projection rank from $r_{\mathrm{base}}$ to $r_{\mathrm{base}}/2$ at a time while keeping all others at the uniform baseline rank. In our experiments at Llama~2 350M, this corresponds to reducing one target module to $r=128$ while keeping the others at $r=256$ (six configurations in total, one per non-receiver module).

Figure~\ref{app_fig:rank_sensitivity} reports the per-module final perplexity under this ablation. The donor/receiver implications and the underlying architectural reasoning that supports $\mathcal{D} = \{\attn{q}, \attn{k}\}$ are discussed in \S\ref{sec:rank_allocation}.

% ============================================================================
% ============================================================================
% ⬆️⬆️⬆️⬆️ APPENDIX C ⬆️⬆️⬆️⬆️
% ============================================================================
% ============================================================================

% ============================================================================
% ============================================================================
% ⬇️⬇️⬇️⬇️ APPENDIX D ⬇️⬇️⬇️⬇️
% ============================================================================
% ============================================================================

\section{Experimental Setup}
\label{app_sec:exp_setup_detail}

\subsection{Model Architecture}
We follow the Llama configurations used in GaLore \citep{zhao2024galore} across model scales. For Llama~3.2 \citep{grattafiori2024llama3herdmodels}, Qwen2.5 \citep{qwen2025qwen25technicalreport}, Qwen3 \citep{yang2025qwen3technicalreport}, and DeepSeek-V2 \citep{deepseekai2024deepseekv2strongeconomicalefficient}, we derive compatible 350M-scale configurations from their official implementations, starting from our Llama~2 \citep{touvron2023llama2openfoundation} 350M setup. Specifically for DeepSeek-V2, which employs a distinct Multi-head Latent Attention (MLA) architecture, we carefully scale the latent projection dimensions to align with the 350M parameter budget (\texttt{q\_lora\_rank}=768, \texttt{kv\_lora\_rank}=256, \texttt{qk\_nope\_head\_dim}=64, \texttt{qk\_rope\_head\_dim}=32, and \texttt{v\_head\_dim}=64). Table~\ref{app_tab:model_hparams} summarizes the key architectural and training hyperparameters across all evaluated models.

\subsection{Training Parameters}
\label{app_subsec:training_params}
\label{appx:training_params}
\label{appx:block_size_config}
We use the C4 English dataset with a T5 tokenizer \citep{raffel2023exploringlimitstransferlearning}. We use AdamW \citep{loshchilov2019adamw} with \(\beta_1=0.9\), \(\beta_2=0.999\), \(\epsilon=10^{-6}\), zero weight decay, and bias correction enabled. The peak learning rate is \(0.01\) (\(0.005\) for 7B), with 10\% linear warmup and cosine decay to a minimum ratio of 0.1. 
Unless otherwise noted, we use a global batch size of 512 and a maximum sequence length of 256.

For GaLore \citep{zhao2024galore}, we use a scale of 0.25 and an interval \(T=200\). The magnitude branch \(M\) and its optimizer states are maintained in fp32, and the recomposed update is cast to bf16 before application. All runs use \texttt{torch.compile} in default mode. Gradient clipping of 1.0 is applied only to the 7B model.

\paragraph{Block size and rank budget per model scale.}
The block size $B$ used in the main experiments follows the head-dimension-informed default discussed in \S\ref{sec:block_decomposition}:
we choose $B$ in the neighborhood of $d_{\mathrm{head}}$, using $B=32$ for Llama~2 350M ($d_{\mathrm{head}}=64$), $B=64$ for Llama~2 1B ($d_{\mathrm{head}}=64$), and $B=128$ for Llama~2 7B ($d_{\mathrm{head}}=128$).
Per-scale baseline projection ranks $r_{\mathrm{base}}$ follow the GaLore convention: $r_{\mathrm{base}} = 256$ at 350M, $r_{\mathrm{base}} = 512$ at 1B, and $r_{\mathrm{base}} = 1024$ at 7B; module-wise allocation $\{r_\tau\}$ follows the static reallocation rule of Algorithm~\ref{alg:moara_full} with $\mathcal{D} = \{Q, K\}$, $\mathcal{R} = \{\mathrm{MLP.down}\}$, and $\Delta r = r_{\mathrm{base}}/2$ (Appendix~\ref{appx:final_rank_allocation}).

\subsection{Hardware and Profiling Details}
For all step-to-target and time-to-target experiments, GaLore~\citep{zhao2024galore} and MoARa are compared under identical hardware conditions, using NVIDIA RTX A6000, H100, and RTX PRO 6000 GPUs depending on model scale.

For memory profiling, we measure memory usage after the first 50 training steps under the same sequence-length and batch-size settings as in Appendix~\ref{app_subsec:training_params}.

% ============================================================================
% ============================================================================
% ⬆️⬆️⬆️⬆️ APPENDIX D ⬆️⬆️⬆️⬆️
% ============================================================================
% ============================================================================

% ============================================================================
% ============================================================================
% ⬇️⬇️⬇️⬇️ FIGURE: LLAMA 3.2 & QWENs & DEEPSEEK ⬇️⬇️⬇️⬇️
% ============================================================================
% ============================================================================

\begin{figure*}[t]
    \centering
    \includegraphics[width=1\textwidth]{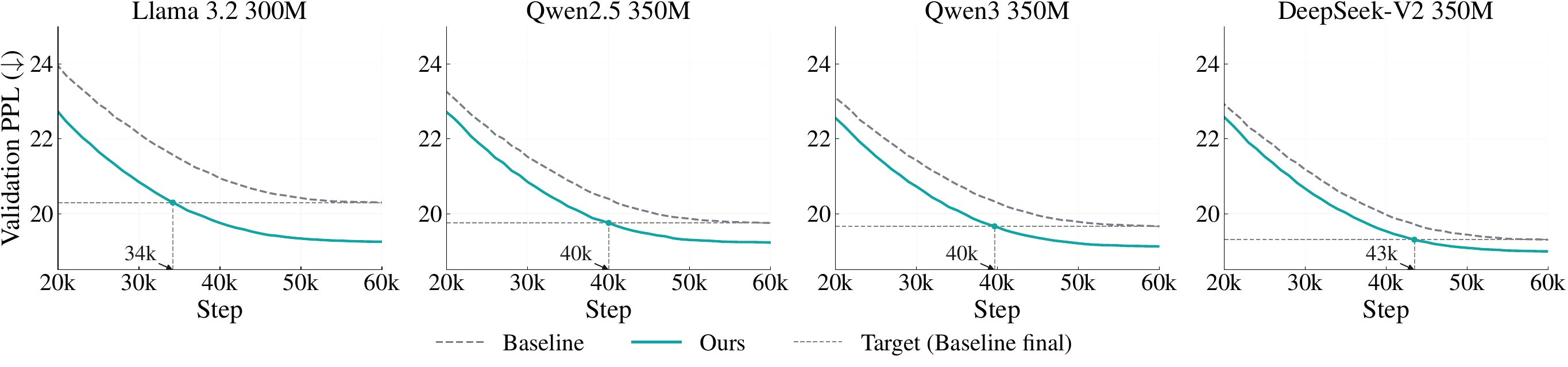}
    \caption{\textbf{Validation perplexity curves across recent Transformer architectures at the 350M scale.} 
    We compare our method with GaLore on Llama 3.2, Qwen2.5, Qwen3, and DeepSeek-V2. 
    The dashed horizontal line marks the final validation perplexity of the baseline. 
    In all cases, our method reaches this target earlier than the baseline.}
    \label{app_fig:step-to-target(qwen/deepseek)}
\end{figure*}

% ============================================================================
% ============================================================================
% ⬆️⬆️⬆️⬆️ FIGURE: LLAMA 3.2 & QWENs & DEEPSEEK ⬆️⬆️⬆️⬆️
% ============================================================================
% ============================================================================

% ============================================================================
% ============================================================================
% ⬇️⬇️⬇️⬇️ APPENDIX E ⬇️⬇️⬇️⬇️
% ============================================================================
% ============================================================================

\section{Additional Experimental Validation}
\label{app_sec:additional_experimental_validation}
\subsection{Random Rank Allocation}
\label{app_subsec:projection_rank_random_distribution}
\label{appx:randomized_allocation}
We performed a validation experiment to verify that our projection-rank redistribution outperforms random budget-preserving allocations.
Under identical projection rank budget constraints, we pre-trained 20 Llama 2 350M models with randomly assigned ranks for 10k steps and compared them with our method. 
The projection ranks and resulting evaluation PPL for each of the 20 runs are summarized in Table~\ref{app_tab:randomrank_table}. 
Our module-aware allocation outperforms all 20 random variants, and none approach its convergence trajectory.

\subsection{Cross-Architecture Generalization}
\label{appx:architecture_results}

Figure~\ref{app_fig:step-to-target(qwen/deepseek)} reports the full validation PPL curves for the cross-architecture study referenced in \S\ref{sec:main_result}. Across Llama~3.2~\citep{grattafiori2024llama3herdmodels}, Qwen2.5~\citep{qwen2025qwen25technicalreport}, Qwen3~\citep{yang2025qwen3technicalreport}, and DeepSeek-V2~\citep{deepseekai2024deepseekv2strongeconomicalefficient}, MoARa reaches the standard GaLore~\citep{zhao2024galore} target earlier in all cases despite the architectural differences (GQA in Llama~3.2 and the Qwen family, MLA in DeepSeek-V2). The cross-scale speedup figure (Figure~\ref{fig:speedup_across_scale}) is included alongside for visual reference.

% ============================================================================
% ============================================================================
% ⬇️⬇️⬇️⬇️ FIGURE: SPEED_UP ⬇️⬇️⬇️⬇️
% ============================================================================
% ============================================================================

\begin{figure}[t]
    \centering
    \includegraphics[width=1\linewidth]{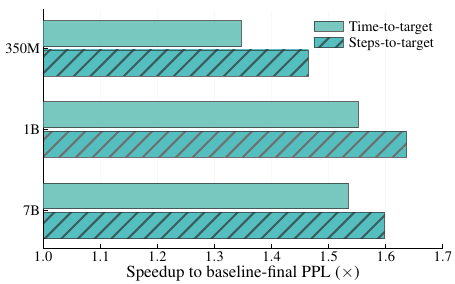}
    \caption{\textbf{Speedup across model scales.} 
    MoARa's speedup over standard GaLore in reaching the baseline's final perplexity, measured by both steps-to-target and time-to-target. GaLore with MoARa achieves up to $1.64\times$ step speedup across all three configurations.}
    \label{fig:speedup_across_scale}
\end{figure}

% ============================================================================
% ============================================================================
% ⬆️⬆️⬆️⬆️ FIGURE: SPEED_UP ⬆️⬆️⬆️⬆️
% ============================================================================
% ============================================================================

% ============================================================================
% ============================================================================
% ⬇️⬇️⬇️⬇️ TABLE: RANDOM 20 ⬇️⬇️⬇️⬇️
% ============================================================================
% ============================================================================

\begin{table}[t]
    \centering
    \caption{\textbf{Random rank allocations and final evaluation perplexity.}
    Each run preserves the same total projection-rank budget. 
    G, U, and D denote the gate, up, and down MLP projections. Lower PPL is better.}
    \label{app_tab:randomrank_table}
    \small
    \setlength{\tabcolsep}{1.4pt}
    \renewcommand{\arraystretch}{1.02}
    \begin{tabular*}{\columnwidth}{@{\extracolsep{\fill}}lrrrrrrrr@{}}
    \toprule
    \textbf{Run} & \textbf{Q} & \textbf{K} & \textbf{V} & \textbf{O}
    & \textbf{G} & \textbf{U} & \textbf{D} & \textbf{PPL} \\
    \midrule
    R01 & 128 & 240 & 560 & 184 & 256 &  80 & 344 & 24.72 \\
    R02 & 184 & 320 & 280 & 320 & 352 & 272 &  64 & 24.85 \\
    R03 & 216 & 208 & 248 & 328 & 376 & 304 & 112 & 24.53 \\
    R04 & 160 & 472 &  72 & 360 & 256 &  80 & 392 & 25.00 \\
    R05 & 352 & 192 &  88 & 448 & 408 & 128 & 176 & 24.88 \\
    R06 &  96 & 136 & 112 & 608 & 456 & 232 & 152 & 24.53 \\
    R07 & 352 & 256 & 192 & 208 & 120 & 536 & 128 & 24.83 \\
    R08 & 552 & 104 & 192 & 128 &  64 & 440 & 312 & 24.92 \\
    R09 & 200 & 360 & 104 & 344 & 376 &  80 & 328 & 24.73 \\
    R10 & 128 & 312 & 416 & 200 & 328 &  72 & 336 & 24.60 \\
    R11 &  72 & 568 & 376 &  80 &  72 & 152 & 472 & 25.29 \\
    R12 & 160 & 208 &  64 & 680 & 128 & 360 & 192 & 24.78 \\
    R13 & 144 & 208 & 344 & 248 & 328 & 312 & 208 & 24.33 \\
    R14 & 256 & 480 &  80 & 232 & 472 & 120 & 152 & 25.17 \\
    R15 & 184 & 224 & 256 & 312 & 560 & 136 & 120 & 24.63 \\
    R16 & 144 & 392 & 360 & 192 & 248 & 336 & 120 & 24.72 \\
    R17 & 232 & 304 &  64 & 424 & 376 & 304 &  88 & 24.88 \\
    R18 & 184 &  64 & 112 & 592 & 176 & 504 & 160 & 24.56 \\
    R19 & 136 &  72 & 720 &  80 & 224 & 288 & 272 & 24.73 \\
    R20 & 152 &  64 & 256 & 536 & 496 &  80 & 208 & 24.58 \\
    \midrule
    \textbf{Ours}
        & \textbf{128} & \textbf{128} & \textbf{256} & \textbf{256}
        & \textbf{256} & \textbf{256} & \textbf{512}
        & \textbf{24.07} \\
    \bottomrule
    \end{tabular*}
\end{table}

% ============================================================================
% ============================================================================
% ⬆️⬆️⬆️⬆️ TABLE: RANDOM 20 ⬆️⬆️⬆️⬆️
% ============================================================================
% ============================================================================

\subsection{Robustness at Sequence Length 2048}
\label{appx:context_2048}
The main scaling and architecture results in \S\ref{sec:main_result} use the GaLore~\citep{zhao2024galore}-standard sequence length of 256, which matches the GaLore baseline's setup and isolates the optimizer-side comparison from sequence-length-dependent variables. 
To test whether the efficiency gains persist beyond this short-context setting, we re-evaluate GaLore with MoARa against standard GaLore at sequence length 2048 on Llama~2~\citep{touvron2023llama2openfoundation} 350M. 
All other training hyperparameters are kept identical to the sequence-length-256 setup of Appendix~\ref{app_subsec:training_params}.

As shown in Figure~\ref{app_fig:context_2048}, GaLore with MoARa reaches standard GaLore's final validation perplexity in $35.0\%$ fewer steps and $33.3\%$ less wall-clock time, and improves final perplexity by $2.65\%$ (PPL $18.09 \to 17.61$). 
The reduction profile in both steps and wall-clock time is consistent with the sequence-length-256 result at the same model scale.
This single-scale experiment supports an optimizer-side interpretation but does not establish sequence-length invariance at larger scales or substantially longer contexts.

% ============================================================================
% ============================================================================
% ⬇️⬇️⬇️⬇️ FIGURE: CONTEXT_2048 ⬇️⬇️⬇️⬇️
% ============================================================================
% ============================================================================

\begin{figure}[t]
    \centering
    \includegraphics[width=1\linewidth]{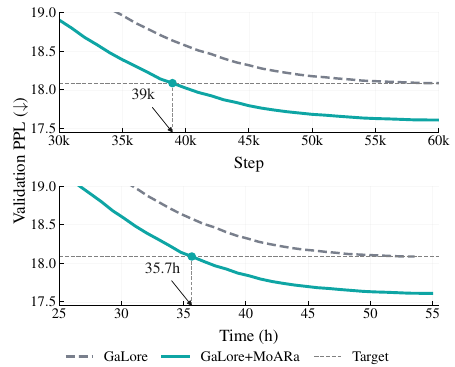}
    \caption{\textbf{Sequence-length 2048 validation perplexity on Llama 2 350M.} 
    Validation perplexity versus training steps (top) and wall-clock time (bottom); 
    annotated points mark where GaLore with MoARa reaches the standard GaLore target (dashed line).}
    \label{app_fig:context_2048}
\end{figure}

% ============================================================================
% ============================================================================
% ⬆️⬆️⬆️⬆️ FIGURE: CONTEXT_2048 ⬆️⬆️⬆️⬆️
% ============================================================================
% ============================================================================

\subsection{Multi-seed Robustness}
\label{app_subsec:1B_multiseed_analysis}
Figure~\ref{app_fig:1B_multiseed_ppl_curve} shows the full 1B pre-training trajectories over three random seeds. Our method consistently outperforms the baseline in all runs, which supports the robustness claim in \S\ref{sec:main_result}.

\subsection{Q-GaLore Precision Analysis}
\label{appx:qgalore_precision}
We isolate whether the magnitude-branch optimizer-state precision explains the Group C behavior observed in \S\ref{sec:analysis}.
The official Q-GaLore configuration uses INT4 projection matrices, INT8 weights, and an 8-bit Adam optimizer.
We therefore evaluate Q-GaLore + decomposition while changing only the magnitude-branch optimizer state from 8-bit to FP32; the INT4 projection matrices and INT8 weights remain unchanged.
This change does not recover the benefit of decomposition.
Thus, the magnitude-branch optimizer-state precision is unlikely to be the primary bottleneck.
Because projection and weight quantization remain active together in this ablation, we do not attribute the limitation to either component individually.
Rather, the result supports a broader quantization-compatibility limitation for decomposition.
A direct comparison with GaLore + decomposition is consistent with this interpretation: decomposition alone yields a final perplexity comparable to vanilla GaLore, whereas the same decomposition under Q-GaLore produces a small but consistent increase in final perplexity.

% ============================================================================
% ============================================================================
% ⬇️⬇️⬇️⬇️ FIGURE: 1B_MULTISEED ⬇️⬇️⬇️⬇️
% ============================================================================
% ============================================================================

\begin{figure}
    \centering
    \includegraphics[width=1\linewidth]{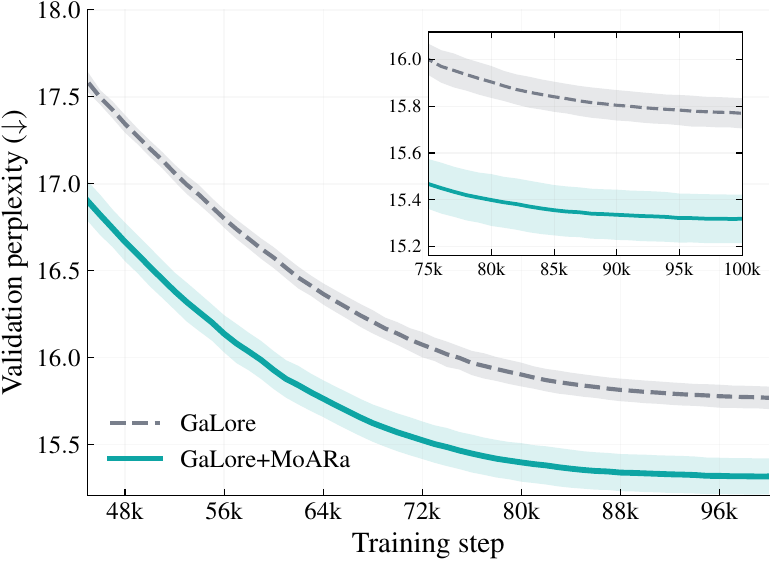}
    \caption{\textbf{Validation perplexity across three random seeds on Llama 2 1B.} Our method consistently outperforms the baseline. The inset zooms into the final phase.}
    \label{app_fig:1B_multiseed_ppl_curve}
\end{figure}

% ============================================================================
% ============================================================================
% ⬆️⬆️⬆️⬆️ FIGURE: 1B_MULTISEED ⬆️⬆️⬆️⬆️
% ============================================================================
% ============================================================================

% ============================================================================
% ============================================================================
% ⬆️⬆️⬆️⬆️ APPENDIX E ⬆️⬆️⬆️⬆️
% ============================================================================
% ============================================================================

%\clearpage

% ============================================================================
% ============================================================================
% ⬇️⬇️⬇️⬇️ APPENDIX F ⬇️⬇️⬇️⬇️
% ============================================================================
% ============================================================================

\section{Detailed Benchmark Results}
\label{app_sec:benchmarks}
\label{appx:downstream}

This section reports the full benchmark tables that support the summary results in \S\ref{sec:main_result}. Table~\ref{app_tab:1B_benchmarks_full} shows the three-seed evaluation at the final checkpoint on seven downstream benchmarks (HellaSwag~\citep{zellers2019hellaswag}, WinoGrande~\citep{sakaguchi2020winogrande}, PIQA~\citep{bisk2020piqa}, BoolQ~\citep{clark2019boolq}, ARC-Challenge and ARC-Easy~\citep{clark2018think}, and TruthfulQA~\citep{lin2022truthfulqa}) together with C4~\citep{raffel2023exploringlimitstransferlearning} and WikiText-2~\citep{merity2017pointer} perplexity, reporting mean and standard deviation over three random seeds. Table~\ref{app_tab:60k_vs_100k} compares our 60k-step intermediate checkpoint with the 100k-step GaLore~\citep{zhao2024galore} baseline to demonstrate token efficiency rather than final-checkpoint parity.

\paragraph{Multi-seed robustness at the final checkpoint.}
The three-seed evaluation in Table~\ref{app_tab:1B_benchmarks_full} shows that GaLore~\citep{zhao2024galore} with MoARa is comparable to or outperforms GaLore on the vast majority of metrics.
MoARa has lower standard deviation on all six perplexity metrics and on several downstream metrics.
The consistent mean trends across seeds indicate that the reported gains are not driven by a single favorable seed.

\paragraph{ARC-Challenge under short-context pre-training.}
Among the summary metrics reported in Table~\ref{tab:downstream_1b}, ARC-Challenge~\citep{clark2018think} normalized accuracy is the only metric that decreases.
We interpret this gap with caution.
ARC-Challenge is a four-way multiple-choice benchmark in which chance-level accuracy is $0.25$, and both models attain mean normalized accuracy within $0.003$ of chance level. 
The task which requires multi-step science reasoning and remains challenging for our 1B model pre-trained at sequence length 256 sits very near the chance-level floor for both checkpoints, so the per-seed differences reflect noise around that floor rather than a substantive capability gap. 
This interpretation is consistent with the modest seed-to-seed variation reported in Table~\ref{app_tab:1B_benchmarks_full}.

% ============================================================================
% ============================================================================
% ⬇️⬇️⬇️⬇️ TABLE: 1B_FULL_BENCHMARKS ⬇️⬇️⬇️⬇️
% ============================================================================
% ============================================================================

\begin{table*}[t]
    \centering
    \caption{\textbf{Detailed zero-shot benchmark results on Llama 2 1B.}
    Both standard GaLore and GaLore with MoARa are evaluated at their final 100k-step checkpoints. We report per-seed scores, mean $\mu$, and standard deviation $\sigma$ over three seeds. We abbreviate bits-per-byte, byte-perplexity, word-perplexity, and normalized accuracy as bpb, bppl, wppl, and acc\_n.}
    \label{app_tab:1B_benchmarks_full}
    \small
    \setlength{\tabcolsep}{3.2pt}
    \renewcommand{\arraystretch}{1.06}
    \begin{tabular*}{\textwidth}{@{\extracolsep{\fill}}llc*{5}{r}*{5}{r}@{}}
    \toprule
    \textbf{Task} & \textbf{Metric} & \textbf{Dir.}
    & \multicolumn{5}{c}{\textbf{GaLore}}
    & \multicolumn{5}{c}{\textbf{GaLore with MoARa}} \\
    \cmidrule(lr){4-8}
    \cmidrule(l){9-13}
    & & 
    & \textbf{1} & \textbf{2} & \textbf{3} & $\boldsymbol{\mu}$ & $\boldsymbol{\sigma}$
    & \textbf{1} & \textbf{2} & \textbf{3} & $\boldsymbol{\mu}$ & $\boldsymbol{\sigma}$ \\
    \midrule

    \multirow{3}{*}{C4}
    & bpb  & $\downarrow$ & 1.077 & 1.091 & 1.060 & 1.076 & 0.016 & 1.056 & 1.063 & 1.073 & \textbf{1.064} & 0.008 \\
    & bppl & $\downarrow$ & 2.110 & 2.130 & 2.084 & 2.108 & 0.023 & 2.079 & 2.089 & 2.103 & \textbf{2.091} & 0.012 \\
    & wppl & $\downarrow$ & 87.18 & 92.34 & 81.13 & 86.88 & 5.612 & 79.96 & 82.29 & 85.61 & \textbf{82.62} & 2.837 \\
    \addlinespace[1pt]
    
    \multirow{3}{*}{WikiText}
    & bpb  & $\downarrow$ & 1.135 & 1.136 & 1.103 & 1.125 & 0.019 & 1.108 & 1.098 & 1.111 & \textbf{1.106} & 0.007 \\
    & bppl & $\downarrow$ & 2.197 & 2.197 & 2.148 & 2.181 & 0.029 & 2.156 & 2.141 & 2.159 & \textbf{2.152} & 0.010 \\
    & wppl & $\downarrow$ & 67.218 & 67.332 & 59.588 & 64.712 & 4.439 & 60.856 & 58.616 & 61.334 & \textbf{60.269} & 1.451 \\
    
    \multirow{2}{*}{ARC-C}
    & acc    & $\uparrow$ & 0.212 & 0.215 & 0.212 & \textbf{0.213} & 0.002 & 0.206 & 0.202 & 0.197 & 0.202 & 0.004 \\
    & acc\_n & $\uparrow$ & 0.255 & 0.248 & 0.252 & \textbf{0.252} & 0.004 & 0.244 & 0.257 & 0.244 & 0.248 & 0.007 \\
    \addlinespace[1pt]
    
    \multirow{2}{*}{ARC-E}
    & acc    & $\uparrow$ & 0.477 & 0.469 & 0.484 & 0.477 & 0.008 & 0.487 & 0.476 & 0.481 & \textbf{0.482} & 0.006 \\
    & acc\_n & $\uparrow$ & 0.423 & 0.421 & 0.430 & 0.425 & 0.005 & 0.436 & 0.434 & 0.430 & \textbf{0.433} & 0.003 \\
    \addlinespace[1pt]
    
    BoolQ
    & acc & $\uparrow$ & 0.586 & 0.556 & 0.495 & 0.546 & 0.046 & 0.565 & 0.557 & 0.552 & \textbf{0.558} & 0.007 \\
    \addlinespace[1pt]
    
    \multirow{2}{*}{HellaSwag}
    & acc    & $\uparrow$ & 0.326 & 0.325 & 0.326 & 0.326 & 0.001 & 0.329 & 0.331 & 0.332 & \textbf{0.331} & 0.001 \\
    & acc\_n & $\uparrow$ & 0.386 & 0.389 & 0.384 & 0.386 & 0.002 & 0.402 & 0.394 & 0.399 & \textbf{0.398} & 0.004 \\
    \addlinespace[1pt]
    
    \multirow{2}{*}{PIQA}
    & acc    & $\uparrow$ & 0.679 & 0.681 & 0.680 & \textbf{0.680} & 0.001 & 0.675 & 0.677 & 0.682 & 0.678 & 0.004 \\
    & acc\_n & $\uparrow$ & 0.674 & 0.677 & 0.678 & 0.677 & 0.002 & 0.674 & 0.684 & 0.687 & \textbf{0.682} & 0.007 \\
    \addlinespace[1pt]
    
    WinoGrande
    & acc & $\uparrow$ & 0.504 & 0.526 & 0.508 & 0.513 & 0.012 & 0.527 & 0.539 & 0.526 & \textbf{0.531} & 0.007 \\
    \addlinespace[1pt]
    
    TruthfulQA
    & mc2 & $\uparrow$ & 0.398 & 0.421 & 0.407 & \textbf{0.409} & 0.011 & 0.417 & 0.393 & 0.408 & 0.406 & 0.012 \\
    \addlinespace[1pt]
        
    \bottomrule
    \end{tabular*}
\end{table*}

% ============================================================================
% ============================================================================
% ⬆️⬆️⬆️⬆️ TABLE: 1B_FULL_BENCHMARKS ⬆️⬆️⬆️⬆️
% ============================================================================
% ============================================================================

% ============================================================================
% ============================================================================
% ⬇️⬇️⬇️⬇️ TABLE: 1B_BENCHMARKS_60K_vs_100K ⬇️⬇️⬇️⬇️
% ============================================================================
% ============================================================================

\begin{table*}[t]
    \centering
    \caption{\textbf{Zero-shot benchmarks on Llama 2 1B: 60k vs. 100k.}
    We compare standard GaLore at 100k steps with GaLore with MoARa at 60k steps. $\Delta$ denotes the difference between GaLore with MoARa and GaLore. $\pm$ denotes the standard error reported by the evaluation harness. Results are from a single seed. We abbreviate bits-per-byte, byte-perplexity, word-perplexity, and normalized accuracy as bpb, bppl, wppl, and acc\_n.}
    \label{app_tab:60k_vs_100k}
    \small
    \setlength{\tabcolsep}{5.5pt}
    \renewcommand{\arraystretch}{1.08}
    
    \begin{tabular*}{\textwidth}{@{\extracolsep{\fill}}llcrrr@{}}
    \toprule
    \textbf{Task} & \textbf{Metric} & \textbf{Dir.}
    & \textbf{GaLore}
    & \textbf{GaLore+MoARa}
    & $\boldsymbol{\Delta}$ \\
    \midrule

    \multirow{3}{*}{C4}
    & bpb  & $\downarrow$ & $1.060$ & $1.085$ & $+0.025$ \\
    & bppl & $\downarrow$ & $2.084$ & $2.121$ & $+0.036$ \\
    & wppl & $\downarrow$ & $81.130$ & $89.981$ & $+8.851$ \\
    \addlinespace[1pt]
    
    \multirow{3}{*}{WikiText}
    & bpb  & $\downarrow$ & $1.103$ & $1.125$ & $+0.022$ \\
    & bppl & $\downarrow$ & $2.148$ & $2.180$ & $+0.033$ \\
    & wppl & $\downarrow$ & $59.588$ & $64.603$ & $+5.016$ \\
    
    \multirow{2}{*}{ARC-C}
    & acc    & $\uparrow$ & $0.212 \pm 0.012$ & $0.192 \pm 0.012$ & $-0.020$ \\
    & acc\_n & $\uparrow$ & $0.252 \pm 0.013$ & $0.247 \pm 0.013$ & $-0.005$ \\
    \addlinespace[1pt]
    
    \multirow{2}{*}{ARC-E}
    & acc    & $\uparrow$ & $0.484 \pm 0.010$ & $0.475 \pm 0.010$ & $-0.009$ \\
    & acc\_n & $\uparrow$ & $0.430 \pm 0.010$ & $0.422 \pm 0.010$ & $-0.008$ \\
    \addlinespace[1pt]
    
    BoolQ
    & acc & $\uparrow$ & $0.495 \pm 0.009$ & $0.561 \pm 0.009$ & $+0.066$ \\
    \addlinespace[1pt]
    
    \multirow{2}{*}{HellaSwag}
    & acc    & $\uparrow$ & $0.326 \pm 0.005$ & $0.329 \pm 0.005$ & $+0.003$ \\
    & acc\_n & $\uparrow$ & $0.384 \pm 0.005$ & $0.393 \pm 0.005$ & $+0.008$ \\
    \addlinespace[1pt]
    
    \multirow{2}{*}{PIQA}
    & acc    & $\uparrow$ & $0.680 \pm 0.011$ & $0.676 \pm 0.011$ & $-0.004$ \\
    & acc\_n & $\uparrow$ & $0.678 \pm 0.011$ & $0.675 \pm 0.011$ & $-0.003$ \\
    \addlinespace[1pt]
    
    WinoGrande
    & acc & $\uparrow$ & $0.508 \pm 0.014$ & $0.511 \pm 0.014$ & $+0.004$ \\
    \addlinespace[1pt]
    
    TruthfulQA
    & mc2 & $\uparrow$ & $0.407 \pm 0.015$ & $0.408 \pm 0.015$ & $+0.001$ \\
    \addlinespace[1pt]
    
    \bottomrule
    \end{tabular*}
\end{table*}

% ============================================================================
% ============================================================================
% ⬆️⬆️⬆️⬆️ TABLE: 1B_BENCHMARKS_60K_vs_100K ⬆️⬆️⬆️⬆️
% ============================================================================
% ============================================================================

\paragraph{60k MoARa versus 100k GaLore: downstream behavior at an earlier checkpoint.}
Table~\ref{app_tab:60k_vs_100k} compares the 60k-step intermediate checkpoint from our 100k-step GaLore~\citep{zhao2024galore} with MoARa run against the final 100k-step GaLore checkpoint.
This checkpoint is selected because its pretraining validation perplexity is close to GaLore's final training-run value.
On the separate C4 and WikiText evaluation metrics in Table~\ref{app_tab:60k_vs_100k}, however, GaLore retains a clear perplexity advantage, as MoARa has seen approximately 40\% fewer training tokens ($7.86\text{B}$ vs.\ $13.1\text{B}$ at a context length of $256$).

Despite this perplexity gap, the downstream picture is different. Ten of the eleven reported downstream comparisons fall within two standard errors of the GaLore baseline, and on BoolQ the 60k GaLore with MoARa checkpoint actually exceeds the 100k GaLore baseline. 
The reductions in downstream accuracy that do appear are uniformly small in absolute terms.
ARC-Challenge remains near the chance-level floor in normalized accuracy, whereas the ARC-Easy and PIQA differences are smaller than the corresponding standard errors reported by the evaluation harness.

\paragraph{Why does the downstream gap not track the perplexity gap?}
Two complementary explanations are consistent with this pattern. 
First, one possible explanation is that the additional 40\% of training tokens primarily refine the next-token distribution along directions that lower per-token surprisal on general-domain text without unlocking qualitatively new downstream behaviors. 
This is consistent with the broader observation that perplexity continues to decrease smoothly with more tokens, while downstream capability often follows a more saturating curve in this token range.
Second, our main step-to-target results (\S\ref{sec:main_result}) already demonstrate that GaLore with MoARa reaches GaLore's perplexity target with substantially fewer tokens; Table~\ref{app_tab:60k_vs_100k} extends this token-efficiency observation by showing that downstream behavior is largely retained at the point where GaLore with MoARa is still trailing GaLore in raw perplexity. 
We do not interpret the BoolQ improvement at 60k as evidence of a separate generalization gain.

% ============================================================================
% ============================================================================
% ⬆️⬆️⬆️⬆️ APPENDIX F ⬆️⬆️⬆️⬆️
% ============================================================================
% ============================================================================

% ============================================================================
% ============================================================================
% ⬇️⬇️⬇️⬇️ APPENDIX G ⬇️⬇️⬇️⬇️
% ============================================================================
% ============================================================================

\section{Detailed Memory Analysis}
\label{appendix:vram_analysis}
\label{appx:memory}

This section provides the comprehensive memory measurements used in \S\ref{sec:main_result}. 

For a target matrix $W$, let $r_W$ denote its projection rank and $d_W$ the state dimension not compressed by the projection. Let $b_V$ and $b_M$ denote the bytes per entry in each direction-branch and magnitude-branch moment tensor, respectively. 
The first- and second-moment states of the projected direction require approximately
\[\mathcal{M}_{\mathrm{dir}}(W)=2 b_V r_W d_W.\]

Relative to the uniform baseline rank $r_{\mathrm{base}}$, projection rank reallocation changes the persistent optimizer-state cost by
\[\Delta\mathcal{M}_{\mathrm{rank}}=2 b_V\sum_W\left(r_W-r_{\mathrm{base}}\right)d_W.\]

For $W\in\mathbb{R}^{m_W\times n_W}$, the block-magnitude tensor has shape
\[m_W\times\left\lceil\frac{n_W}{B_W}\right\rceil,\]
so its Adam states add approximately
\[\Delta\mathcal{M}_{\mathrm{mag}}=2 b_M\sum_Wm_W\left\lceil\frac{n_W}{B_W}\right\rceil.\]

The sums include all target matrices across all layers, and the factor of $2$ accounts for Adam's first and second moments. 
Because donor and receiver matrices can have different $d_W$, preserving the total projection-rank budget does not imply identical optimizer-state bytes. 
At 7B, the measured optimizer-state increases are 864~MiB from projection-rank allocation and 386~MiB from block-wise decomposition; together they match the 1,250~MiB increase of full MoARa.

Table~\ref{app_tab:memory_profiling_full}(a) reports scale-wise memory profiling under both eager and compiled settings; notably, under the compiled setting, most intermediate overhead is absorbed, leaving the remaining cost dominated by the added optimizer states. 
Furthermore, Table~\ref{app_tab:memory_profiling_full}(b) breaks down the memory cost of each component, with all measurements taken in eager mode to effectively isolate the algorithmic overhead. 
This granular breakdown allows us to separate the specific memory requirements of block-wise decomposition from those of projection rank allocation, showing that the two components combine approximately additively across model scales.

% ============================================================================
% ============================================================================
% ⬆️⬆️⬆️⬆️ APPENDIX G ⬆️⬆️⬆️⬆️
% ============================================================================
% ============================================================================

% ============================================================================
% ============================================================================
% ⬇️⬇️⬇️⬇️ APPENDIX H ⬇️⬇️⬇️⬇️
% ============================================================================
% ============================================================================

\section{Cosine similarity of \texorpdfstring{$G$}{G} and \texorpdfstring{$G_{\text{recon}}$}{G\_recon}}
\label{app_sec:cosine}
\label{appx:cosine_trajectories}

This section reports module-wise cosine similarity trajectories between the full-rank gradient \(G\) and the reconstructed gradient \(G_{\text{recon}}\). These plots support the alignment analysis in \S\ref{sec:rank_allocation} and show how our projection rank reallocation improves low-alignment bottleneck modules across models.

\subsection{Llama 2 350M}
We first show the full module-wise cosine similarity trajectories on Llama~2 \citep{touvron2023llama2openfoundation} 350M in Figure~\ref{app_fig:cosine-350m}. The baseline shows the lowest alignment on \texttt{mlp.down}, while our method improves this bottleneck by reallocating projection rank from robust donor modules.

\subsection{Llama 2 1B}
Figure~\ref{app_fig:cosine-1b} shows the same analysis on Llama~2 \citep{touvron2023llama2openfoundation} 1B. The overall pattern is consistent with the 350M case: the main improvement appears on the most projection-sensitive modules.

\subsection{Llama 2 7B}
Figure~\ref{app_fig:cosine-7b} reports the trajectories on Llama~2 \citep{touvron2023llama2openfoundation} 7B. The receiver-side alignment bottleneck and its improvement remain consistent at the larger scale.

\subsection{Llama 3.2 300M}
Figure~\ref{app_fig:cosine-llama32} shows the trajectories on Llama~3.2 \citep{grattafiori2024llama3herdmodels} 300M.
The same trend appears in this architecture as well, supporting the cross-architecture consistency of our alignment-based analysis.

\subsection{Qwen2.5 350M}
Figure~\ref{app_fig:cosine-qwen25} shows the trajectories on Qwen2.5 \citep{qwen2025qwen25technicalreport} 350M. 
Again, our method improves the lowest-alignment receiver while exhibiting the expected donor-side alignment reductions.

\subsection{Qwen3 350M}
Figure~\ref{app_fig:cosine-qwen3} shows the trajectories on Qwen3 \citep{yang2025qwen3technicalreport} 350M. The result is consistent with the earlier models and supports the same bottleneck-resolution pattern.

\subsection{DeepSeek-V2 350M}
Figure~\ref{app_fig:cosine-deepseekv2} reports the trajectories on the dense variant of DeepSeek-V2 \citep{deepseekai2024deepseekv2strongeconomicalefficient} 350M. Although the attention module structure differs from standard MHA-based models, the same qualitative pattern remains: our method improves the most sensitive modules while keeping the donor-side degradation small.

% ============================================================================
% ============================================================================
% ⬇️⬇️⬇️⬇️ TABLE: MEMORY_PROFILING ⬇️⬇️⬇️⬇️
% ============================================================================
% ============================================================================
\begin{table*}[t]
    \centering
    \caption{\textbf{Memory profiling and component-wise memory ablations.}
    Values are reported in MiB. Panel (a) compares eager (\texttt{OFF}) and
    compiled (\texttt{ON}) settings across model scales; $\Delta$ denotes the
    increase of our method over the baseline. Panel (b) reports component-wise
    memory ablations; $\Delta$ rows show the relative change from the baseline.}
    \label{app_tab:memory_profiling_full}
    \normalsize
    
    \textbf{(a) Memory profiling under eager and compiled settings}
    \vspace{0.25em}
    
    \setlength{\tabcolsep}{5.5pt}
    \renewcommand{\arraystretch}{1.06}
    \begin{tabular*}{\textwidth}{@{\extracolsep{\fill}}llrrrrrr@{}}
    \toprule
    \textbf{Setting} & \textbf{Metric}
    & \multicolumn{2}{c}{\textbf{350M}}
    & \multicolumn{2}{c}{\textbf{1B}}
    & \multicolumn{2}{c}{\textbf{7B}} \\
    \cmidrule(lr){3-4}
    \cmidrule(lr){5-6}
    \cmidrule(l){7-8}
    & & \textbf{Base} & \textbf{Ours}
      & \textbf{Base} & \textbf{Ours}
      & \textbf{Base} & \textbf{Ours} \\
    \midrule
    
    \multirow{4}{*}{\texttt{OFF}}
    & Max alloc.    & 47258.61 & 47316.03 & 51611.42 & 51611.42 & 60898.64 & 64790.63 \\
    & $\Delta$      & \multicolumn{2}{c}{+57.42 (+0.12\%)}
                     & \multicolumn{2}{c}{+0.00 (+0.00\%)}
                     & \multicolumn{2}{c}{+3891.99 (+6.39\%)} \\
    & Max reserved  & 48154.00 & 48210.00 & 54988.00 & 55304.00 & 62122.00 & 66738.00 \\
    & $\Delta$      & \multicolumn{2}{c}{+56.00 (+0.12\%)}
                     & \multicolumn{2}{c}{+316.00 (+0.57\%)}
                     & \multicolumn{2}{c}{+4616.00 (+7.43\%)} \\
    
    \midrule
    
    \multirow{4}{*}{\texttt{ON}}
    & Max alloc.    & 33740.11 & 33804.05 & 38208.39 & 38208.77 & 61777.64 & 61777.68 \\
    & $\Delta$      & \multicolumn{2}{c}{\textbf{+63.94 (+0.19\%)}}
                     & \multicolumn{2}{c}{\textbf{+0.38 (+0.00\%)}}
                     & \multicolumn{2}{c}{\textbf{+0.04 (+0.00\%)}} \\
    & Max reserved  & 34632.00 & 34654.00 & 40850.00 & 41166.00 & 63486.00 & 63614.00 \\
    & $\Delta$      & \multicolumn{2}{c}{\textbf{+22.00 (+0.06\%)}}
                     & \multicolumn{2}{c}{\textbf{+316.00 (+0.77\%)}}
                     & \multicolumn{2}{c}{\textbf{+128.00 (+0.20\%)}} \\
    
    \midrule
    
    \multirow{2}{*}{\texttt{OFF/ON}}
    & Opt. state    & 538.57 & 650.88 & 1652.34 & 1956.57 & 7177.02 & 8427.02 \\
    & $\Delta$      & \multicolumn{2}{c}{+112.31 (+20.85\%)}
                     & \multicolumn{2}{c}{+304.23 (+18.41\%)}
                     & \multicolumn{2}{c}{+1250.00 (+17.42\%)} \\
    
    \bottomrule
    \end{tabular*}
    
    \vspace{0.9em}
    
    \textbf{(b) Component-wise memory ablation}
    \vspace{0.25em}
    
    \setlength{\tabcolsep}{7pt}
    \renewcommand{\arraystretch}{1.06}
    \begin{tabular*}{\textwidth}{@{\extracolsep{\fill}}llrrrr@{}}
    \toprule
    \textbf{Model} & \textbf{Metric}
    & \textbf{Baseline}
    & \textbf{Decomp.}
    & \textbf{Rank Alloc.}
    & \textbf{Ours} \\
    \midrule
    
    \multirow{6}{*}{350M}
    & Max alloc.   & 47258.61 & 47280.14 & 47258.61 & 47316.03 \\
    & $\Delta$     & --       & +0.05\%  & +0.00\%  & +0.12\% \\
    & Max reserved & 48154.00 & 48232.00 & 48130.00 & 48210.00 \\
    & $\Delta$     & --       & +0.16\%  & -0.05\%  & +0.12\% \\
    & Opt. state   & 538.57   & 610.75   & 578.69   & 650.88 \\
    & $\Delta$     & --       & +13.40\% & +7.45\%  & +20.85\% \\
    
    \midrule
    
    \multirow{6}{*}{1B}
    & Max alloc.   & 51611.42 & 51611.42 & 51611.42 & 51611.42 \\
    & $\Delta$     & --       & +0.00\%  & +0.00\%  & +0.00\% \\
    & Max reserved & 54988.00 & 55150.00 & 55132.00 & 55304.00 \\
    & $\Delta$     & --       & +0.29\%  & +0.26\%  & +0.57\% \\
    & Opt. state   & 1652.34  & 1796.58  & 1812.32  & 1956.57 \\
    & $\Delta$     & --       & +8.73\%  & +9.68\%  & +18.41\% \\
    
    \midrule
    
    \multirow{6}{*}{7B}
    & Max alloc.   & 60898.64 & 64790.63 & 64790.63 & 64790.63 \\
    & $\Delta$     & --       & +6.39\%  & +6.39\%  & +6.39\% \\
    & Max reserved & 62122.00 & 66738.00 & 66610.00 & 66738.00 \\
    & $\Delta$     & --       & +7.43\%  & +7.22\%  & +7.43\% \\
    & Opt. state   & 7177.02  & 7563.02  & 8041.02  & 8427.02 \\
    & $\Delta$     & --       & +5.38\%  & +12.04\% & +17.42\% \\
    
    \bottomrule
    \end{tabular*}
\end{table*}

% ============================================================================
% ============================================================================
% ⬆️⬆️⬆️⬆️ TABLE: MEMORY_PROFILING ⬆️⬆️⬆️⬆️
% ============================================================================
% ============================================================================

% ============================================================================
% ============================================================================
% ⬆️⬆️⬆️⬆️ APPENDIX H ⬆️⬆️⬆️⬆️
% ============================================================================
% ============================================================================

% ============================================================================
% ============================================================================
% ⬇️⬇️⬇️⬇️ APPENDIX I ⬇️⬇️⬇️⬇️
% ============================================================================
% ============================================================================

\section{Licenses and Intended Use}

All artifacts used in this work are public research artifacts used in an
academic research context consistent with their intended use.

\paragraph{Dataset and tokenizer.}
The C4 English corpus~\citep{raffel2023exploringlimitstransferlearning} is
released by AllenAI under ODC-BY 1.0, with use also subject to the Common
Crawl terms of use. The T5 tokenizer is released by Google under
Apache-2.0.

\paragraph{Model architectures.}
Llama~2~\citep{touvron2023llama2openfoundation} and Llama~3.2~\citep{grattafiori2024llama3herdmodels}
reference implementations are released by Meta under the Llama~2 Community
License and the Llama~3.2 Community License, respectively. Qwen2.5~\citep{qwen2025qwen25technicalreport}
and Qwen3~\citep{yang2025qwen3technicalreport} reference implementations are
released by Alibaba under Apache-2.0. The DeepSeek-V2~\citep{deepseekai2024deepseekv2strongeconomicalefficient}
reference implementation is released under MIT (code license); the
separately released pretrained weights, which we do not use, are under the
DeepSeek Model License. For all five architectures, we use only the
reference implementations to derive 350M-scale configurations for
from-scratch pre-training and do not redistribute any original pretrained
checkpoints.

\paragraph{Baseline optimizer implementations.}
The reference implementations of GaLore~\citep{zhao2024galore}, Fira~\citep{chen2025fira}, Q-GaLore~\citep{zhang2025qgalore}, OSD~\citep{liang2024osd}, and LDAdam~\citep{robert2025ldadam} are released under Apache-2.0. 
The SubTrack++~\citep{rajabi2025subtrackpp} reference implementation did not include an explicit license file at the time of access.
We use it only as an academic evaluation baseline and do not redistribute its code.

\paragraph{Released artifacts.}
We release no new pretrained model checkpoints or datasets in this
submission.

% ============================================================================
% ============================================================================
% ⬆️⬆️⬆️⬆️ APPENDIX I ⬆️⬆️⬆️⬆️
% ============================================================================
% ============================================================================

\clearpage

\begin{figure*}
  \centering
  \includegraphics[width=\textwidth]{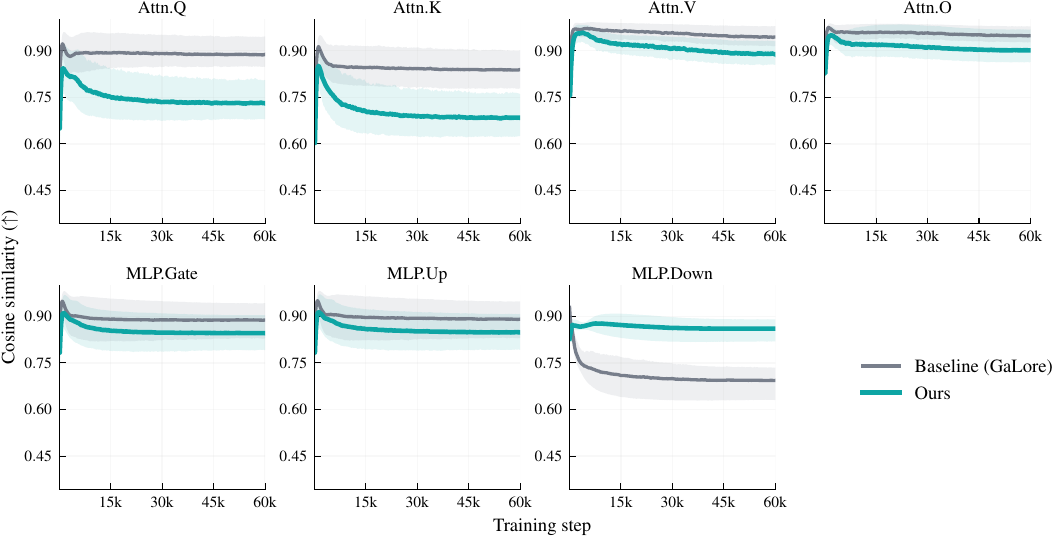}
  \caption{\textbf{Cosine similarity trajectories on Llama 2 350M.} We track the cosine similarity between \(G\) and \(G_{\text{recon}}\) for each target module during pre-training.}
  \label{app_fig:cosine-350m}
\end{figure*}

\begin{figure*}
  \centering
  \includegraphics[width=\textwidth]{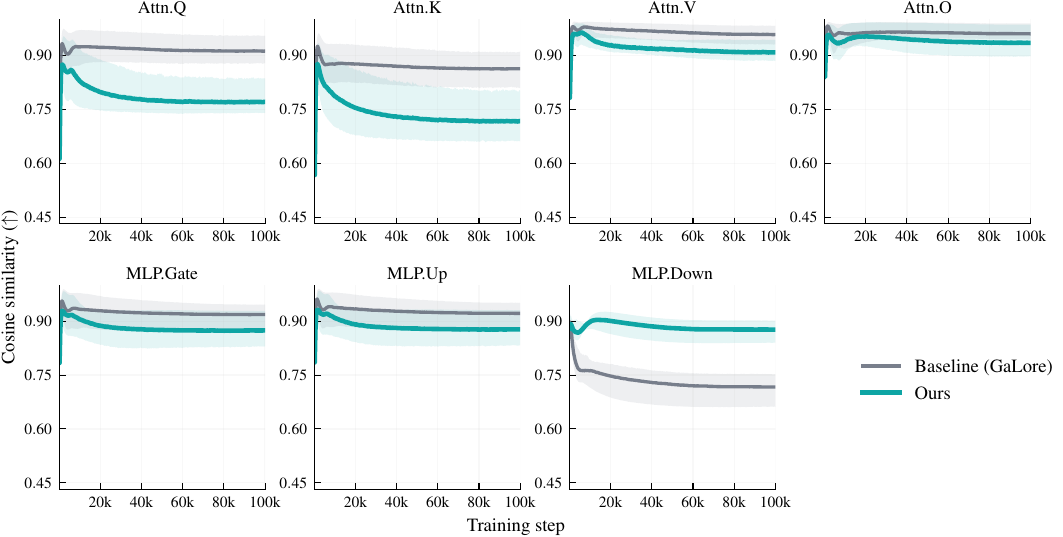}
  \caption{\textbf{Cosine similarity trajectories on Llama 2 1B.} The same module-wise alignment analysis is repeated at the 1B scale.}
  \label{app_fig:cosine-1b}
\end{figure*}

\begin{figure*}
  \centering
  \includegraphics[width=\textwidth]{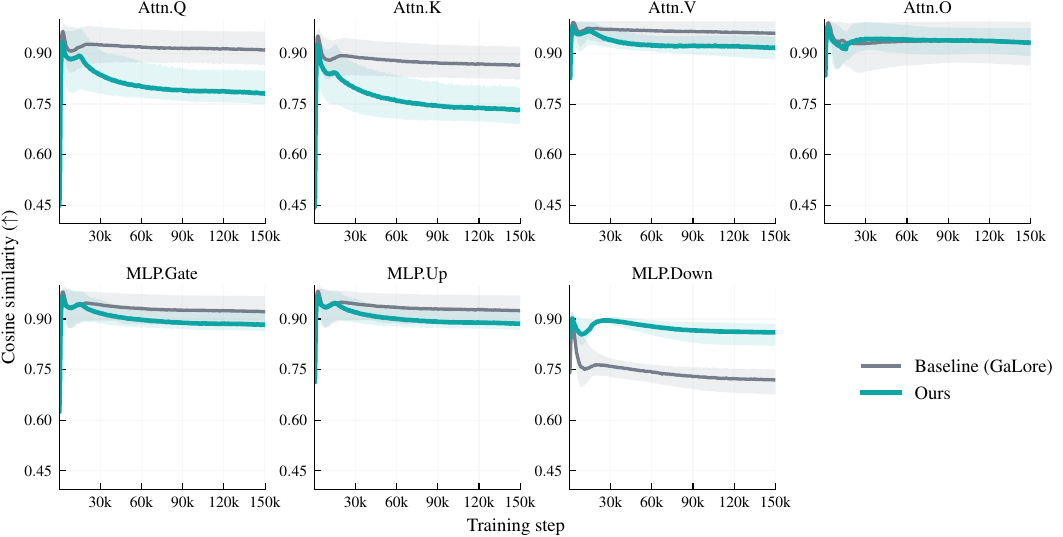}
  \caption{\textbf{Cosine similarity trajectories on Llama 2 7B.} The same module-wise alignment analysis is repeated at the 7B scale.}
  \label{app_fig:cosine-7b}
\end{figure*}

\begin{figure*}
  \centering
  \includegraphics[width=\textwidth]{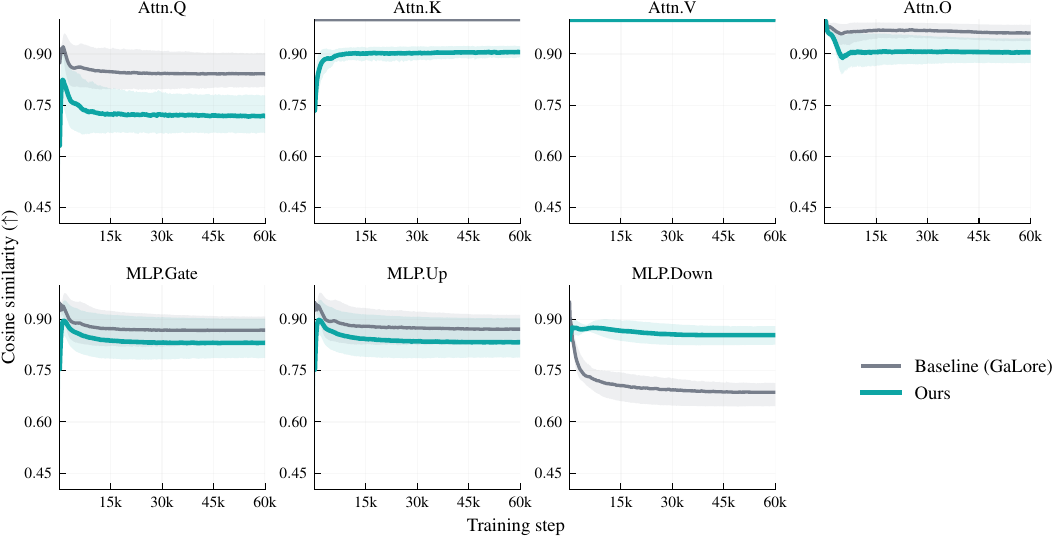}
  \caption{\textbf{Cosine similarity trajectories on Llama 3.2 300M.} The same analysis is applied to Llama 3.2.}
  \label{app_fig:cosine-llama32}
\end{figure*}

\begin{figure*}
  \centering
  \includegraphics[width=\textwidth]{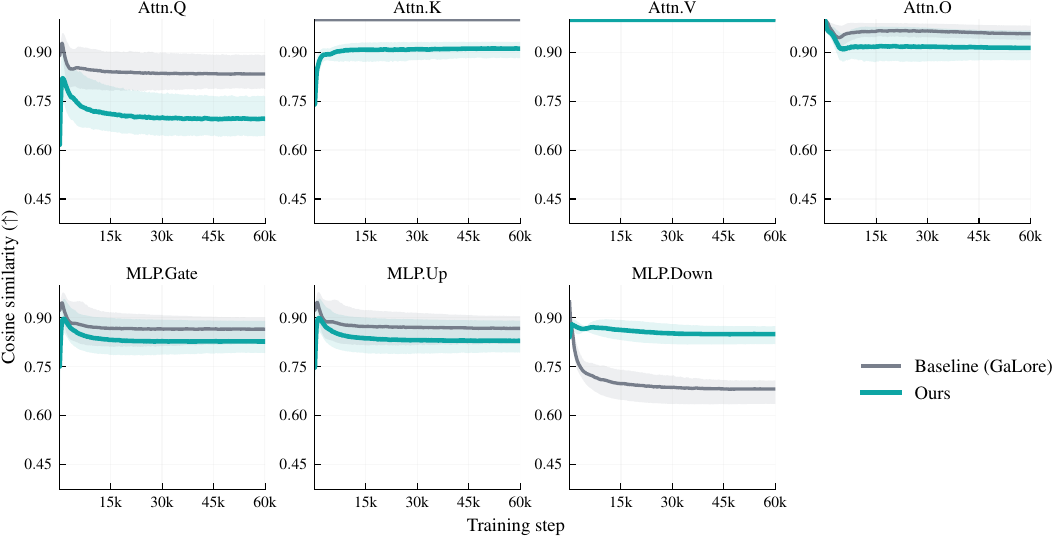}
  \caption{\textbf{Cosine similarity trajectories on Qwen2.5 350M.} The same analysis is applied to Qwen2.5.}
  \label{app_fig:cosine-qwen25}
\end{figure*}

\begin{figure*}
  \centering
  \includegraphics[width=\textwidth]{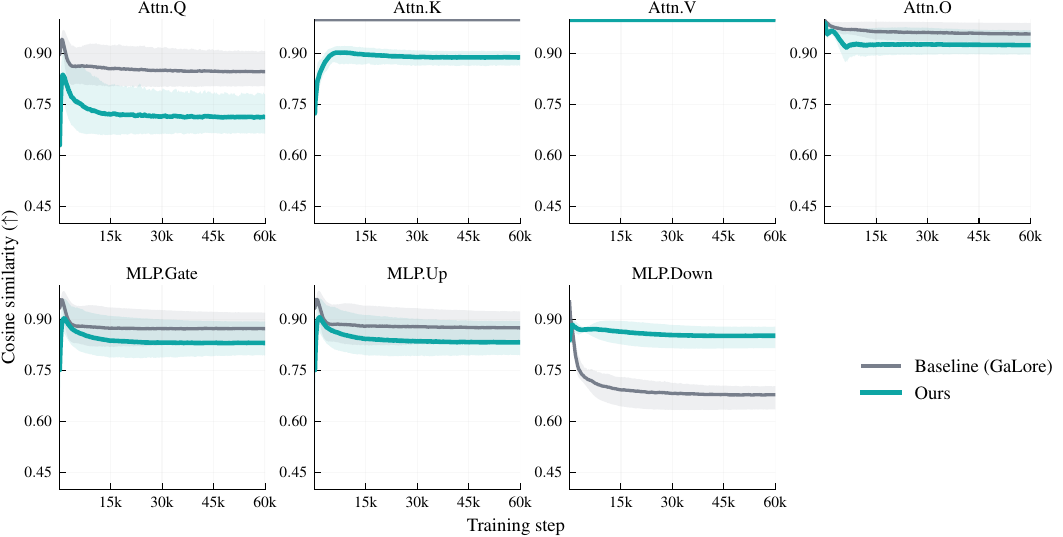}
  \caption{\textbf{Cosine similarity trajectories on Qwen3 350M.} The same analysis is applied to Qwen3.}
  \label{app_fig:cosine-qwen3}
\end{figure*}

\begin{figure*}
  \centering
  \includegraphics[width=\textwidth]{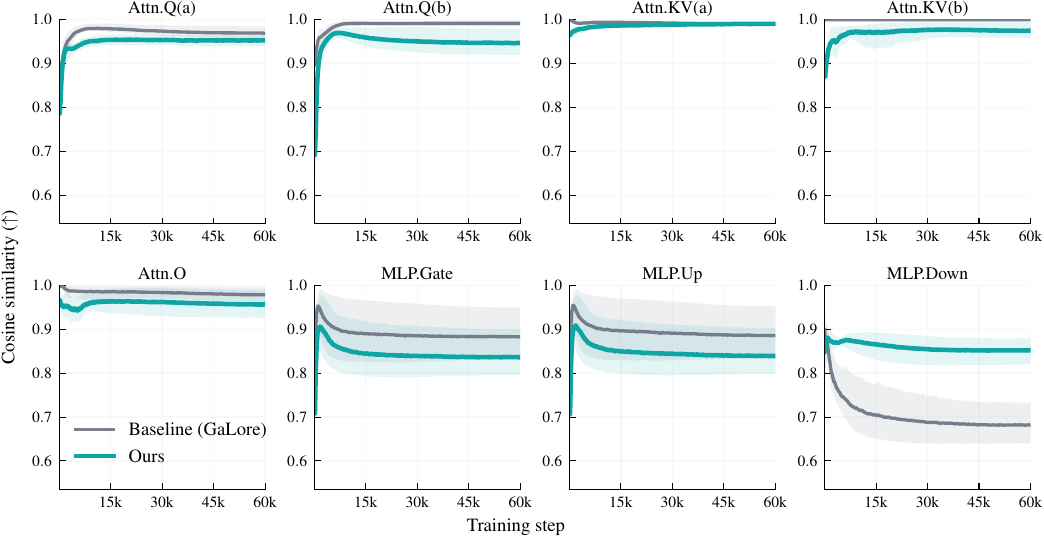}
  \caption{\textbf{Cosine similarity trajectories on DeepSeek-V2 350M.} The same analysis is applied to DeepSeek-V2.}
  \label{app_fig:cosine-deepseekv2}
\end{figure*}

\end{document}